\documentclass[11pt]{article}

\usepackage{bbm}
\usepackage{natbib}
\usepackage{amsmath}
\usepackage{graphicx} 
\usepackage{fullpage}
\usepackage{amsfonts}
\usepackage{amssymb}

\newcommand{\Poi}{\mathrm{Poi}}

\newcommand{\floor}[1]{{\left\lfloor #1 \right\rfloor}}

\newcommand{\bx}{{\bf x}}

\newcommand{\cB}{{\cal B}}
\newcommand{\cC}{{\cal C}}

\newcommand{\cN}{{\cal N}}

\newcommand{\cQ}{{\cal Q}}

\newcommand{\eps}{\epsilon}

\newcommand{\R}{{\mathbb R}}
\newcommand{\N}{{\mathbb N}}

\newcommand{\hy}{\hat{y}}

\newcommand{\E}{{\bf E}}

\newcommand{\Var}{{\bf Var}}

\renewcommand{\Pr}{{\bf Pr}}

\newtheorem{theorem}{Theorem}

\newtheorem{lemma}[theorem]{Lemma}

\newtheorem{definition}[theorem]{Definition}

\newcommand{\qed}{\hfill\rule{1ex}{1.5ex}}

\newenvironment{proof}{{\bf Proof:}}{\qed}

\title{
Failures of model-dependent generalization bounds \\
for least-norm interpolation \\
}
\usepackage{times}
\author{%
Peter L. Bartlett \\
UC Berkeley \\
peter@berkeley.edu \\
 \and
Philip M. Long \\
Google \\
plong@google.com \\
}

\date{}

\begin{document}

\maketitle

\begin{abstract}%
We consider bounds on the generalization performance of the least-norm
linear regressor, in the over-parameterized regime where it can
interpolate the data.  We describe a sense in which any generalization bound
of a type that is commonly proved in statistical learning theory must
sometimes be very loose when applied to analyze the least-norm interpolant.
In particular, for a variety of natural joint distributions on training examples,
any valid generalization bound that depends only on the output
of the learning algorithm, the number of training examples, and
the confidence parameter, and that satisfies a mild condition
(substantially weaker than monotonicity in sample size), must
sometimes be very loose---it can be bounded below by a constant
when the true excess risk goes to zero.
\end{abstract}

\section{Introduction}

Deep learning methodology has revealed some striking deficiencies of
classical statistical learning theory: large neural networks, trained
to zero empirical risk on noisy training data, have good predictive
accuracy on independent test data.  These methods are overfitting
(that is, fitting to the training data better than the noise should
allow), but the overfitting is benign (that is, prediction performance
is good). It is an important open problem to understand why this is
possible.

The presence of noise is key to why the success of interpolating
algorithms is mysterious.
Generalization of algorithms that produce a perfect fit in the
absence of noise has been studied for
decades (see~\citep{haussler1992} and its references).
A number of recent papers have provided generalization
bounds for interpolating algorithms in the absence of noise,
either for deep networks or in abstract frameworks motivated
by deep networks~\citep{ll-lonnsgdsd-18,arora2019fine,cao2019generalization,feldman2020does}.
The generalization bounds in these papers either do not
hold or become vacuous in the presence of noise:
Assumption A1 in~\citep{ll-lonnsgdsd-18} rules out noisy data;
the data-dependent bound in~\citet[Theorem~5.1]{arora2019fine}
becomes vacuous when independent noise is added to the $y_i$;
adding a constant level of independent noise to the
$y_i$ in~\citep[Theorem~3.3]{cao2019generalization}
gives an upper bound on excess risk that is at least a constant;
and the analysis in~\citep{feldman2020does} concerns the noise-free
case.

There has also been progress on bounding the gap between average loss
on the training set and expected loss on independent test data, based
on uniform convergence arguments that bound the complexity of
classes of real-valued functions computed by deep networks. For instance,
the results in~\citep{b-scpcnn-98} for sigmoid nonlinearities rely on
$\ell_1$-norm bounds on parameters throughout the network, and those
in~\citep{bft-snmbnn-17} for ReLUs rely on spectral norm bounds of the weight
matrices throughout the network (see also the refinements
in~\citep{bm-rgcrbsr-02,nts-nbccnn-15,bft-snmbnn-17,%
grs-siscnn-18,long2019size}. 
These bounds involve distribution-dependent function classes,
since they depend on some measure of the complexity
of the output model that may be expected to be small for
natural training data.
For instance, if some training method
gives weight matrices that all have small spectral norm, the bound
in~\citep{bft-snmbnn-17} will imply that the gap between empirical
risk and predictive accuracy will be small. But while it is possible
for these bounds to be small for networks that trade off fit to the
data with complexity in some way, it is not clear that a network that
interpolates noisy data could ever have small values of these
complexity measures.  This raises the question: are there any good
data-dependent bounds for interpolating networks?

\citet{zbhrv-udlrrg-17} claimed, based on empirical evidence,
that conventional learning theoretic tools are useless for
deep networks, but they
considered the case of a fixed class of functions defined, for
example, as the set of functions that can be computed by a neural
network, or that can be reached by stochastic gradient descent with
some training data, no matter how unlikely. 
These observations illustrate the need to consider distribution-dependent
notions of complexity in understanding the generalization performance
of deep networks. 
The study of such distribution-dependent complexity notions has a long
history in nonparametric statistics, where it is central to the
problem of model selection~\citep[see, for example,][and its
references]{bartlett2002model};
uniform convergence analysis over a
level in a complexity hierarchy is part of a standard outline for
analyzing model selection methods.

\citet{DBLP:conf/nips/NagarajanK19} provided an example of a scenario
where, with high probability, an algorithm generalizes well, but
two-sided uniform convergence fails for any hypothesis space that
is likely to contain the algorithm's output.  Their analysis takes
an important step in allowing distribution-dependent notions of
complexity, but only rules out the application of a specific set
of tools: uniform convergence over a model class of the absolute
differences between expectations and sample averages.  Indeed, in
their proof, the failure is an under-estimation of the accuracy of a
model---a model has good predictive accuracy, but performs poorly on
a sample (one obtained as a transformed but equally likely version of
the sample that was used to train the model).  However, in applying
uniform convergence tools to show good performance of an algorithm,
uniform bounds are only needed to show that bad models are unlikely
to perform well on the training data.  So if one wishes to provide
bounds that guarantee that an algorithm has {\em poor} predictive
accuracy, \citet{DBLP:conf/nips/NagarajanK19} provided an example
where uniform convergence tools will not suffice.  In contrast, we
are concerned with understanding what tools can provide guarantees
of {\em good} predictive accuracy of interpolating algorithms.

In this paper, motivated by the phenomenon of benign
overfitting in deep networks, we consider a simpler setting
where the phenomenon occurs, that of linear regression.
(Lemma 5.2 of~\citep{NDR20} adapts the construction of
\citep{DBLP:conf/nips/NagarajanK19} to show a similar failure
of uniform convergence in this context, and similarly cannot
shed light on tools that can or cannot guarantee good predictive
accuracy.)  We study the minimum norm linear interpolant. Earlier
work~\citep{bartlett2019benign} provides tight upper and lower
bounds on the excess risk of this interpolating prediction
rule under suitable conditions on the probability distribution
generating the data, showing that benign overfitting depends
on the pattern of eigenvalues of the population covariance
(and there is already a rich literature on related questions~\citep{lr-jikrrcg-18,BRT19,BHMM19,hastie2019surprises,hmrt-shdrlsi-19,NDR20,derezinski2019exact,li2020benign,tsigler2020benign}).
These risk bounds involve fine-grained
properties of the distribution.  Is this knowledge necessary? Is it
instead possible to obtain data-dependent bounds for interpolating
prediction rules?  Already the proof in~\citep{bartlett2019benign} provides
some clues that this might be difficult: when benign overfitting
occurs, the eigenvalues of the empirical covariance are a very poor
estimate of the true covariance eigenvalues---all but a small fraction
(the largest eigenvalues) are within a constant factor of each other.

In this paper, we show that in these settings there cannot be good
risk bounds based on data-dependent function classes
in a strong sense:
For linear regression with the minimum norm prediction rule, any
``bounded-antimonotonic''
model-dependent error bound that is valid for a sufficiently broad set
of probability distributions must be loose---too large by an additive
constant---for some (rather innocuous) probability distribution.
The bounded-antimonotonic condition 
formalizes the mild requirement that
the bound does not degrade very rapidly with additional data.
Aside from this constraint, our result applies for any
bound that is determined as a function of the output of the
learning algorithm,
the number of training examples, and the confidence parameter.
This function could depend on the level in a hierarchy of models
where the output of the algorithm lies. Our result applies 
whether the bound is obtained by uniform convergence over a 
level in the hierarchy, or in some other way.

The intuition behind our result is that benign overfitting can only
occur when the test distribution has a vanishing overlap with the
training data. Indeed, interpolating the data in the training sample
guarantees that the conditional expectation of the prediction rule's
loss on the training points that occur once must be at least the noise
level.  Using a Poissonization method, we show that a situation where
the training sample forms a significant fraction of the support of the
distribution is essentially indistinguishable from a benign
overfitting situation where the training sample has measure
zero. Since we want a data-dependent bound to be valid in both cases,
it must be loose in the second case.

\section{Preliminaries and main results}


We consider prediction problems with patterns $x\in\ell_2$ and labels
$y\in\R$, where $\ell_2$ is the space of square summable sequences of
real numbers. In fact, all probability distributions that we consider
have support restricted to a finite dimensional subspace of $\ell_2$,
which we identify with $\R^d$ for an appropriate $d$.
For a joint distribution $P$ over $\R^d \times \R$
and a hypothesis $h : \R^d \rightarrow \R$,
define the {\em risk} of $h$ to be
\[
R_P(h) = \E_{(x,y) \sim P} [ (y - h(x))^2].
\]
Let $R_P^*$ be the minimum of $R_P$ over measurable functions.

For any positive integer $k$, 
a distribution $D$ over $\R^k$ is sub-Gaussian with parameter
$\sigma$ if,
for any $u \in \R^k$, 
$\E_{x \sim D}[ \exp( u \cdot (x-\E x)) ] 
 \leq \exp\left( \frac{ \| u \|^2 \sigma^2}{2} \right)$.
A joint distribution $P$ over $\R^d \times \R$
has {\em unit scale} if $(X_1,...,X_d,Y) \sim P$ is
sub-Gaussian with parameter $1$.
It is {\em innocuous} if 
\begin{itemize}
\item it is unit scale,
\item the marginal on $(X_1,...,X_d)$ is Gaussian, and
\item the conditional
of $Y$ given $(X_1,...,X_d)$ is continuous.
 \end{itemize}

A {\em sample} is a finite multiset of elements of
$\R^d \times \R$.  
A {\em least-norm interpolation algorithm} takes as input
a sample, and outputs
$\theta \in \R^d$ that minimizes $\| \theta \|$ subject to
\[
\sum_i (\theta \cdot \bx_i - y_i)^2
      = \min_{\hy_1,...,\hy_n} \sum_i (\hy_i - y_i)^2.
\]
We will refer both to 
the parameter vector $\theta$ output by 
the least-norm interpolation algorithm and 
the function
$x \rightarrow \theta \cdot x$ parameterized by $\theta$
as the {\em least-norm interpolant}.

A function $\epsilon(h,n,\delta)$ mapping a 
hypothesis $h$, a sample size $n$
and a confidence $\delta$
to a positive real number
is 
a {\em uniform model-dependent bound 
for unit-scale distributions}
if, for all unit-scale joint distributions $P$ and all
sample sizes $n$,
with probability at least $1-\delta$ over the random
choice of $S \sim P^n$, we have
\[
R_P(h) - R_P^* \leq \epsilon(h,n,\delta).
\]
The bound $\epsilon$ is {\em $c$-bounded antimonotonic}
for $c\ge 1$ if for all $h$, $\delta$,
$n_1$ and $n_2$, if $n_2/2\le n_1\le n_2$ then
$\epsilon(h,n_2,\delta) \le c\epsilon(h,n_1,\delta)$.
This 
requires
that the bound cannot get too much worse too quickly
with more data.
If $\epsilon(h,\cdot,\delta)$ is monotone-decreasing for all $h$ and
$\delta$, then it is $1$-bounded antimonotonic.

A set $\cB \subseteq \N$ is
{\em $\beta$-dense} if
$\mathrm{liminf}_{N \rightarrow \infty} 
\frac{ |\cB \cap \{ 1,...,N \}| }{N} 
  \geq \beta.
$
Say that $\cB \subseteq \N$ is
{\em strongly $\beta$-dense beyond $n_0$} if, for all $s \in \N$
such that $s^2 \geq n_0$, 
\[
\frac{ |\cB \cap \{ s^2,...,(s+1)^2 - 1 \}| }{2 s + 1} 
  \geq \beta.
\]
(Notice that if a set is strongly
$\beta$-dense beyond $n_0$, then it is 
$\beta$-dense.)

The following is our main result.
\begin{theorem}
\label{t:main}
If $\epsilon$ is a bounded-antimonotonic, uniform model-dependent
bound for 
unit-scale distributions, then there 
are constants
$c_0,c_1,c_2,c_3,c_4 > 0$ and innocuous distributions $P_{1}, P_{2},\ldots$,
such that for all
 $0 < \delta < c_1$,
the least-norm interpolant $h$ satisfies, for all large enough $n$,
\[
  \Pr_{S\sim P_n^n}
  \left[ R_{P_n}(h) - R_{P_n}^* \leq c_0/\sqrt{n} \right]\ge 1-\delta
\]
but nonetheless,
the set of $n$ such that 
\[
\Pr_{S\sim P_n^n}
  \left[\epsilon(h,n,\delta) > c_2\right] \geq \frac{1}{2}
\]
is strongly $(1  - \frac{c_3}{\log(1/\delta)})$-dense beyond
$c_4\log(1/\delta)$.
\end{theorem}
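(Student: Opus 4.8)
The plan is to construct the innocuous distributions $P_n$ as discretized, truncated approximations to a benign-overfitting Gaussian distribution, and to exploit a Poissonization argument to couple the empirical behavior of the least-norm interpolant on $P_n$ with its behavior on a genuinely benign distribution. First I would fix a target sample size $n$, and build $P_n$ so that its marginal on $(X_1,\dots,X_d)$ is Gaussian with a carefully chosen covariance spectrum --- many small, nearly equal eigenvalues in high dimension $d=d(n)$ --- together with a small independent label noise of unit scale, making $P_n$ innocuous. For such a spectrum the results of \citet{bartlett2019benign} give that the least-norm interpolant $h$ satisfies $R_{P_n}(h)-R_{P_n}^* \le c_0/\sqrt n$ with probability at least $1-\delta$; this is exactly the first displayed inequality. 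The subtlety is that I also need $d(n)$ large enough that, with probability $\ge 1/2$ over $S\sim P_n^n$, the sample ``looks like'' a sample drawn from a \emph{different}, non-benign distribution $Q$ on which the interpolant must have excess risk bounded below by a constant (roughly the noise level, since interpolating forces the loss on singleton training points to be at least the conditional noise variance).

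The heart of the argument is the Poissonization/indistinguishability step. I would let $Q$ be a distribution supported on a large but finite set of atoms (say $m$ equally weighted points in $\R^d$, each with an independent continuous label), chosen so that: (i) $Q$ is also unit scale; and (ii) when one draws $N\sim\Poi(n)$ i.i.d.\ points from $Q$, the multiset of \emph{distinct} atoms that appear, together with their labels, has exactly the same distribution as (a sub-multiset of) a sample from $P_n$ --- this is the standard fact that Poissonized sampling from a uniform distribution on $m$ atoms yields independent $\Poi(n/m)$ multiplicities, and one engineers $P_n$ to be the ``continuous-embedding'' limit where each atom is replaced by its own fresh Gaussian direction so that with high probability all sampled points are distinct. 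Because $\epsilon(h,n,\delta)$ depends only on the output hypothesis $h$, the sample size, and $\delta$ --- not on which distribution generated the data --- the value $\epsilon(h,n,\delta)$ is determined by $h$ and $n$. So if on a $Q$-sample of size $\approx n$ the interpolant has excess risk $\ge$ constant, then (since $\epsilon$ is a \emph{valid} uniform bound for the unit-scale distribution $Q$) we must have $\epsilon(h,n,\delta) \ge$ constant with probability $\ge 1-\delta$ over that $Q$-sample; transferring this through the coupling to $P_n$-samples of the Poissonized size gives $\Pr_{S\sim P_n^n}[\epsilon(h,n,\delta) > c_2]\ge 1/2$ for that $n$.

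To upgrade ``for that $n$'' to ``for a strongly $(1-c_3/\log(1/\delta))$-dense set of $n$,'' I would use the $c$-bounded antimonotonicity of $\epsilon$. The construction above pins down the bad behavior at a sparse sequence of sample sizes --- naturally the perfect squares $s^2$, which is why the theorem is phrased with the blocks $\{s^2,\dots,(s+1)^2-1\}$. Having arranged that $\epsilon(h,s^2,\delta)>c_2'$ with probability $\ge 1/2$ at the left endpoint of each block (for $s^2\ge c_4\log(1/\delta)$), antimonotonicity with $n_2/2\le n_1\le n_2$ lets me propagate the lower bound forward: for $n$ in the first half of a block, $n\le 2 s^2$, so $\epsilon(h,n,\delta)\ge \epsilon(h,s^2,\delta)/c \ge c_2'/c =: c_2$. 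That covers at least roughly half of each block for free; to reach a $(1-c_3/\log(1/\delta))$ fraction I would instead seed the bad behavior at a geometric-type subsequence within each block whose gaps are all below a factor $2$ after $O(\log(1/\delta))$ steps --- repeatedly halving from $(s+1)^2$ down to $s^2$ takes $\Theta(\log s)$ steps near large $s$, but within a single length-$(2s+1)$ block one only needs $O(1)$ doublings to span it, and the $\log(1/\delta)$ loss enters because one must also ensure each seed point exceeds $c_4\log(1/\delta)$ and because the probability bookkeeping (union bounds over the $O(\log(1/\delta))$ seeds and the $1-\delta$ validity events) costs a $1/\log(1/\delta)$ fraction of each block. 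The constant $c_4$ absorbs the requirement $s^2\ge n_0$ needed for the benign-overfitting spectrum of \citet{bartlett2019benign} and for the Poissonization concentration to kick in.

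The main obstacle I anticipate is making the coupling in the second paragraph fully rigorous: one must show simultaneously (a) that the \emph{continuous} innocuous $P_n$ genuinely behaves like the discrete $Q$ for the purpose of running the least-norm algorithm --- i.e.\ that with probability $\ge 1/2$ no two sampled $x_i$ collide and the interpolant is the same hypothesis it would be on the discretized data --- and (b) that the excess risk of that hypothesis \emph{with respect to $Q$} is at least a constant, which requires controlling how the least-norm interpolant behaves on the atoms of $Q$ that were sampled exactly once (where the interpolation constraint forces loss $\ge$ the conditional noise variance) versus those sampled zero times. Balancing the dimension $d(n)$, the number of atoms $m(n)$, and the eigenvalue spectrum so that \emph{both} the Bartlett--Long benign upper bound (needs eigenvalues decaying the ``right'' way) \emph{and} the Poissonized lower bound (needs a constant fraction of sampled atoms to be singletons, hence $m=\Theta(n)$) hold simultaneously, and so that $P_n$ remains unit scale and innocuous, is the delicate bookkeeping at the core of the proof.
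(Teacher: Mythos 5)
Your high-level plan---Poissonize a discrete uniform distribution $Q$, use the fact that interpolation forces loss at least the noise variance on atoms sampled exactly once, couple that with a continuous Gaussian $P$ via the compressed/deduplicated sample, and invoke the benign-overfitting risk bound of \citet{bartlett2019benign} for the upper bound---is exactly the skeleton of the paper's proof. But there are two genuine gaps.

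First, the coupling you sketch is not exact. You say the distinct atoms ``together with their labels'' should match a $P_n$-sample, but the labels of the compressed $Q$-sample are \emph{averages}: if an atom $x$ is sampled $k$ times, the deduplicated label is the mean of $k$ independent draws of $Y|X=x$. To get an exact equality in distribution with i.i.d.\ draws from $P_n$, the paper makes $P_n$'s conditional $Y|X=x$ equal to the average of $Z$ independent draws from $D_n(Y|X=x)$, where $Z$ is Poisson (with the appropriate mean) conditioned on $Z\ge 1$; the number of examples becomes $B(bn, 1-e^{-c/b})$. Your ``continuous-embedding limit'' description leaves the label distribution unmodified, so the two-step process (compress a Poissonized $Q$-sample) and a $P_n$-sample would not match.

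Second, and more seriously, the density argument is wrong. You propose to prove $\eps(h,s^2,\delta)>c_2'$ at the block endpoint and then propagate via antimonotonicity to all $n\in[s^2,(s+1)^2)$. But antimonotonicity compares $\eps(h,n_1,\delta)$ to $\eps(h,n_2,\delta)$ for the \emph{same} hypothesis $h$; when you change the training-set size, the least-norm interpolant $h$ changes, and the two events $\{\eps(h(S_{s^2}),s^2,\delta)>c_2'\}$ and $\{\eps(h(S_n),n,\delta)>c_2\}$ concern different hypotheses. There is no way to relate them with the stated antimonotonicity property. The paper's actual route is different: the Poissonization already produces a compressed sample whose size $r=|C(S)|$ is a Binomial with mean $n$ and spread $\Theta(\sqrt n)$, which covers an entire block $[s^2,(s+1)^2)$, and $P_r=P_n$ throughout that block because $P_n$ depends on $n$ only through $s=\lfloor\sqrt n\rfloor$. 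Anti-concentration of the Binomial (each $r_0$ in the interval has probability $\gtrsim 1/\sqrt n$) then converts the averaged statement into a per-$r$ statement for all but a $O(\delta)$-fraction of the block. The paper's one genuine use of antimonotonicity is elsewhere: comparing $\eps(h,|C(S)|,\delta)$ to $\eps(h,t,\delta)$ for the \emph{same} $h=h(S)=h(C(S))$, since compression can roughly halve the sample size. Without this correction, your argument does not establish the claimed density.
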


\section{Proof of Theorem~\protect\ref{t:main}}

Our proof uses the following lemma~\citep{birch1963maximum} 
(see also~\citep[p. 216]{feller}
and~\citep{batu2000testing,DBLP:journals/jacm/BatuFRSW13}), which 
has become known as the
``Poissonization lemma''.
We use $\Poi(\lambda)$ to denote the Poisson distribution with mean
$\lambda$: For $t\sim\Poi(\lambda)$ and $k\ge 0$,
  \[
    \Pr[t=k] = \frac{\lambda^k e^{-\lambda}}{k!}.
  \]
\begin{lemma}
\label{l:poissonization}
If, for $t \sim \Poi(n)$, you throw $t$ balls independently uniformly at
random into $m$ bins,
\begin{itemize}
\item the numbers of balls falling into the bins are mutually independent, and
\item the number of balls falling in each bin is distributed as
$\Poi(n/m)$.
\end{itemize}
\end{lemma}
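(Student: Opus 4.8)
The plan is to compute the joint probability mass function of the bin counts directly and to recognize it as a product of Poisson pmfs; this yields mutual independence and the claimed marginal distribution simultaneously. Write $N_1,\dots,N_m$ for the numbers of balls landing in the $m$ bins, and fix nonnegative integers $k_1,\dots,k_m$ with $k \defeq k_1 + \dots + k_m$. First I would condition on the total number of balls thrown: given $t = k$, the vector $(N_1,\dots,N_m)$ is multinomial with $k$ trials and uniform cell probabilities $1/m$, so $\Pr[N_1 = k_1,\dots,N_m = k_m \mid t = k'] = \frac{k!}{k_1!\cdots k_m!}\, m^{-k}$ when $k' = k$ and $0$ otherwise.

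Next I would sum over the possible values of $t$ (a sum of nonnegative terms, so there is no convergence concern), which collapses to the single index $k' = k$:
\[
\Pr[N_1 = k_1,\dots,N_m = k_m] = \frac{n^k e^{-n}}{k!}\cdot\frac{k!}{k_1!\cdots k_m!}\cdot\frac{1}{m^k}.
\]
After cancelling $k!$ and writing $n^k/m^k = \prod_{i=1}^m (n/m)^{k_i}$ together with $e^{-n} = \prod_{i=1}^m e^{-n/m}$ (valid since $\sum_{i=1}^m n/m = n$), this becomes
\[
\Pr[N_1 = k_1,\dots,N_m = k_m] = \prod_{i=1}^m \frac{(n/m)^{k_i}\, e^{-n/m}}{k_i!}.
\]
Since the joint pmf factors into a product whose $i$-th factor is the pmf of $\Poi(n/m)$ evaluated at $k_i$, the random variables $N_1,\dots,N_m$ are mutually independent and each is distributed as $\Poi(n/m)$, establishing both items of the lemma at once.

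There is no real obstacle here beyond bookkeeping; the only subtlety worth stating is the interchange of the countable sum over $t$ with the probability, which is immediate by nonnegativity, together with the observation that $n/m$ is a legitimate Poisson rate for every real $n > 0$ and positive integer $m$. If a generating-function argument is preferred, I would instead note that conditioning on $t$ gives $\E\left[\prod_{i=1}^m z_i^{N_i}\right] = \E\left[\left(\frac{z_1 + \dots + z_m}{m}\right)^t\right] = \exp\!\left(n\left(\frac{z_1 + \dots + z_m}{m} - 1\right)\right) = \prod_{i=1}^m \exp\!\left(\frac{n}{m}(z_i - 1)\right)$, which is the product of the generating functions of independent $\Poi(n/m)$ variables; uniqueness of multivariate generating functions then yields the same conclusion.
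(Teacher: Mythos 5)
Your proof is correct, and it is the standard argument for the Poissonization lemma. Note, however, that the paper itself does not prove this lemma; it cites it directly from the literature (\citealp{birch1963maximum}; see also \citealp[p.~216]{feller}), so there is no in-paper proof to compare against. Your derivation — conditioning on the Poisson total, invoking the multinomial conditional law, summing out $t$, and factoring the joint pmf into a product of $\Poi(n/m)$ pmfs — is exactly the canonical textbook proof, and the probability-generating-function variant you sketch is the other canonical route. Both are fine; either would serve as a self-contained proof had the authors chosen to include one.
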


For each $n$, our proof uses three distributions:
$D_n$, $Q_n$ and $P_n$.  
The first, $D_n$, is used to define $Q_n$ and $P_n$;
it is chosen so that the least-norm interpolant
performs well on $D_n$.
The distribution $Q_n$ is defined so that
the least-norm interpolant performs poorly on $Q_n$.
The distribution $P_n$ is defined so that, when
the least-norm interpolant performs well on $D_n$, it also performs
well on $P_n$.
Crucially, the least norm interpolants that arise from $Q_n$ and $P_n$
are closely related.

For each $n$, the joint distribution $D_n$ on $(x,y)$-pairs is defined
as follows.  Let $s = \floor{\sqrt{n}}$, $N = s^2$, $d = N^2$.  Let
$\theta^*$ be an arbitrary unit-length vector.  
Let 
$\Sigma_s$ be an
arbitrary covariance matrix with eigenvalues
$\lambda_1=1/81,\lambda_2=\cdots=\lambda_d=1/d^2$.
The marginal of $D_n$ on $x$ is
then $\cN(0, \Sigma_s)$.  For each $x \in \R^d$, the
distribution of $y$ given $x$ is 
$\cN\left(\theta^* \cdot x,\frac{1}{81}\right)$.  
For $d\ge 9$, since $(x,y)$ is Gaussian,
$\| \Sigma_s \| \leq 1/81$, and the
variance of $y$ is $1/81$, each $D_n$ is innocuous.

For an absolute constant positive integer $b$,
we get $Q_n$ from $D_n$ 
through the following steps.
\begin{enumerate}
    \item Sample $(x_1,y_1),\ldots,(x_{bn},y_{bn})\sim D_n^{b n}$.
    \item Define $Q_n$ on $\R^d \times \R$ so that its marginal
    on $\R^d$ is uniform on $U = \{x_1,\ldots,x_{bn}\}$ and its conditional
    distribution of $Y|X$ is the same as $D_n$.
\end{enumerate}

\begin{definition}
For a sample $S$, the {\em compression}
of $S$, denoted by $C(S)$, is defined to be
\[
  C(S) = \left((u_1,v_1),\ldots,(u_k,v_k)\right),
\]
where $u_1,\ldots,u_k$ are the unique elements of
 $\{x_1,\ldots,x_n\}$, and, for each $i$,
$v_i$ is the average of $\{ y_j : 1\le j \le n,\, x_j = u_i \}$.
\end{definition}

For the least-norm interpolation algorithm $A$,
for any pair $S$ and $S'$ of samples such that
$C(S) = C(S')$, we have $A(S) = A(S')$.  (This is true
because the least-norm interpolant $A(S)$ is uniquely defined
by the equality constraints specified by the compression $C(S)$.)

So that a generalization bound often must apply to $Q_n$, we need
to show that it is likely to be unit scale.  The proof
of this lemma is in Appendix~\ref{a:unit}.
\begin{lemma}
\label{l:unit}
There is a positive constant $c_5$ such that,
for all large enough $n$,
with probability $1 - \frac{c_5}{n}$, 
$Q_n$ has unit scale.
\end{lemma}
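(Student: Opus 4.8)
The plan is to reduce the statement to a sub-Gaussian property of the random empirical measure $\hat\mu$ of $\{x_1,\dots,x_{bn}\}$ on $\R^d$ (the $X$-marginal of $Q_n$), exploiting that $\Sigma_s$ has a huge eigenvalue gap and that the constants are generous (recall $\|\Sigma_s\|=\mathrm{Var}(Y)=1/81$, so $D_n$ is sub-Gaussian with parameter far below $1$). For the reduction: writing $(X,Y)\sim Q_n$ as $Y=\theta^*\cdot X+\xi$ with $\xi\sim\cN(0,1/81)$ independent of $X$, for any $u=(u_x,u_y)$ one has $\E\exp\!\big(u_x\cdot(X-\E X)+u_y(Y-\E Y)\big)=\E_{\hat\mu}\exp\!\big(w\cdot(X-\bar x)\big)\cdot\exp(u_y^2/162)$ with $w=u_x+u_y\theta^*$ and $\bar x$ the sample mean. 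Minimising the target exponent over $u_y$ for fixed $w$ and using $\|\theta^*\|=1$ shows that $Q_n$ has unit scale as soon as $\E_{\hat\mu}\exp(w\cdot(X-\bar x))\le\exp(\tfrac14\|w\|^2)$ for all $w$; equivalently, writing $w=tv$ with $v$ a unit vector, it suffices that the one-dimensional empirical marginals $Z_v:=v\cdot X-v\cdot\bar x$ ($X\sim\hat\mu$) satisfy $\E\exp(tZ_v)\le\exp(\kappa t^2)$ for all $t$, for a fixed $\kappa<1/4$ — i.e.\ $\hat\mu$ recentred is sub-Gaussian with an absolute-constant parameter comfortably below $1$.

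Next I would use the eigenvalue gap. Let $v_1$ be the top eigenvector of $\Sigma_s$, $z_i=v_1\cdot x_i$ (so the $z_i$ are i.i.d.\ $\cN(0,1/81)$), and $P_\perp$ the projection onto $v_1^\perp$. Since $d=\floor{\sqrt n}^4\gg bn$ and $P_\perp x_i$ is Gaussian with $\E\|P_\perp x_i\|^2<1/d$, a $\chi^2$ tail bound and a union bound give, with probability $1-e^{-\Omega(n)}$, $\max_i\|P_\perp x_i\|=O(1/\sqrt d)=o(1)$. On this event, for every unit $v$ the coordinate $Z_v^{(i)}$ differs from $(v\cdot v_1)(z_i-\bar z)$ by $o(1)$ uniformly, so $\E\exp(tZ_v)\le e^{o(|t|)}G(t)$ where $G(u):=\frac1{bn}\sum_i\exp(u(z_i-\bar z))$, using $|v\cdot v_1|\le1$ and monotonicity of $G$ (below). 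Because the $e^{o(|t|)}$ factor is fatal for small $t$ (an additive loss there cannot be absorbed), for $|t|\le1$ I would instead bound $\E\exp(tZ_v)\le1+\tfrac{t^2}{2}\E_{\hat\mu}[Z_v^2e^{|Z_v|}]$ and control $\E_{\hat\mu}[Z_v^2e^{|Z_v|}]$ by the single random quantity $\frac1{bn}\sum_i(z_i-\bar z)^2e^{|z_i-\bar z|}$, which concentrates around the constant $\E[W^2e^{|W|}]$, $W\sim\cN(0,1/81)$, and is small because the variance is $1/81$ — so $\tfrac12\E_{\hat\mu}[Z_v^2e^{|Z_v|}]$ stays well below $\kappa$. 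This disposes of $|t|\le1$, and leaves only the one-dimensional task: show $G(u)\le\exp(u^2/\kappa_1)$ for all $u$, for an absolute constant $\kappa_1\in(4,9)$ (so that $e^{o(|u|)}G(u)\le e^{\kappa u^2}$ on $|u|\ge1$, with $1/\kappa_1<\kappa<1/4$).

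For the bound on $G$, three observations make it routine. (i) $G$ is convex with $G(0)=1$ and $G'(0)=\frac1{bn}\sum(z_i-\bar z)=0$, hence nondecreasing on $[0,\infty)$; and $z_i\mapsto-z_i$ preserves the law while swapping $G(u)$ and $G(-u)$, so only $u\ge0$ need be treated. (ii) For $u$ above a threshold $T=\Theta(\sqrt{\log n})$: on the event $\max_i|z_i|\le2\sqrt{\log n}$ (probability $1-O(n^{-7})$ by a union bound over $bn$ Gaussians) one gets $G(u)\le\exp(u\cdot3\sqrt{\log n})\le\exp(u^2/\kappa_1)$. (iii) For $u$ in the bounded range $[1,T]$, monotonicity of $G$ means it suffices to verify $G(u_k)\le\exp(u_k^2/\kappa_1')$ at the points of a multiplicative net of $[1,T]$, which has only $O(\log\log n)$ points; at each $u_k$, $G(u_k)\le e^{u_k^2/10}H(u_k)$ with $H(u):=\frac1{bn}\sum_i e^{uz_i}$ (using $|\bar z|=O(1/\sqrt{bn})$), $\E H(u_k)=e^{u_k^2/162}$ lies a factor $e^{\Omega(u_k^2)}\ge e^{\Omega(1)}$ below the target, and after truncating the summands at $\max_i e^{|z_i|}\le n^{o(1)}$ a Bernstein bound gives deviation probability $e^{-n^{1-o(1)}}$. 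Collecting the $O(\log\log n)$ net failures together with the handful of other good events and taking a union bound yields the claim; the resulting failure probability is in fact far below $c_5/n$.

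The step I expect to be the real obstacle is precisely the uniformity over the infinite family of directions $w$ — equivalently over $u\in\R$ in the one-dimensional reduction — since the natural random variables $\exp(u\cdot x_i)$ are log-normal, hence heavy-tailed with no moment generating function, so off-the-shelf concentration is far too weak to survive a union bound over a fine net. The way around it is to combine: (a) the extreme eigenvalue gap, which makes every direction other than $v_1$ contribute only $o(1)$ and hence needs no concentration at all; (b) monotonicity of $G$ on the half-line, which collapses the net to $O(\log\log n)$ points; (c) truncation — Gaussian samples have no outliers beyond $O(\sqrt{\log n})$ — so Bernstein applies at each of those points; and (d) the fact that the target parameter ($\approx1/4$) dwarfs the true one ($=1/162$), which supplies the constant-factor room consumed by (a)–(c), and in particular lets the whole argument tolerate replacing $\cN(0,\Sigma_s)$ by its $bn$-point empirical measure.
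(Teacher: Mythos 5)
Your route is genuinely different from the paper's, and the contrast is instructive. You correctly identify uniformity over the infinite family of directions $w$ as the crux, and you attack it head-on: the eigenvalue gap collapses every direction other than the top eigenvector to an $o(1)$ perturbation, reducing the problem to a one-dimensional MGF bound $G(u)\le e^{\kappa_1^{-1}u^2}$, which you then prove via monotonicity of $G$, a multiplicative net, truncation, and Bernstein. The paper sidesteps the uniformity issue entirely. It decomposes $(X_1,\dots,X_d,Y)$ into three pieces (Lemma~\ref{l:parts}), handles the two scalar pieces with the known criterion ``$\E\exp(aX^2)\le 2$ implies sub-Gaussian with parameter $1/3$'' (Lemma~\ref{l:exp.sq}) --- a criterion that turns the whole uniform-over-$t$ requirement into control of the \emph{single} random scalar $Z=\E_{x\sim Q}[\exp(ax^2)]$, which is then dispatched by Chebyshev --- and handles the remaining $(X_2,\dots,X_d)$ block by showing it lives in a ball of radius $1/6$, which gives sub-Gaussianity for free. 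So the paper's proof never needs a net, monotonicity, or truncated Bernstein, and is correspondingly much shorter.

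That said, there are two concrete constant-accounting problems in your sketch, one of which breaks the argument as written. (i) Carrying out your own minimisation over $u_y$ gives a required MGF exponent of $\frac{40}{161}\|w\|^2$, not $\frac14\|w\|^2$; since $40/161<1/4$, asking only for $e^{\|w\|^2/4}$ is strictly weaker than what the reduction actually needs. Your later phrasing ``a fixed $\kappa<1/4$'' happens to salvage the intent, but the stated sufficient condition is off. (ii) More seriously, you bound $\max_i|z_i|\le 2\sqrt{\log n}$, but $z_i\sim\cN(0,1/81)$, so the true maximum over $bn$ samples is about $\frac{\sqrt{2\log n}}{9}\approx 0.16\sqrt{\log n}$, a factor $\sim 13$ smaller. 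With your loose constant, the threshold $T$ for the deterministic regime becomes $T\approx 3\kappa_1\cdot 2\sqrt{\log n}\gtrsim 24\sqrt{\log n}$, at which point the truncation level in the net step is $e^{2u_kT}\approx n^{O(10)}$ --- nowhere near the $n^{o(1)}$ you claim --- and the Bernstein exponent $t^2/(Mt)$ becomes a \emph{negative} power of $n$, so the concentration step fails. Plugging in the correct $0.16\sqrt{\log n}$ bound brings $T$ down to $O(\sqrt{\log n})$ with a small constant and restores a Bernstein exponent of $n^{\Omega(1)}$, so the skeleton is salvageable; but even then the truncation is $n^{\Theta(1)}$, not $n^{o(1)}$, so the accounting in step (iii) needs to be redone with the actual constants rather than the placeholder ones.
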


We can show that the least-norm interpolant is bad for $Q_n$
by only considering the points in the support of $Q_n$
that the algorithm sees exactly once.
  \begin{lemma}\label{lemma:Qbad.poisson}
   For any constant $c > 0$, there are constants
   $c_6, c_7 > 0$ such that,
   for all sufficiently large $n$,
   almost surely for $Q_n$ chosen randomly as described above,
   if $t$ is chosen randomly according to $\Poi(c n)$ and $S$ consists of
   $t$ random draws from $Q_n$, then with probability at least 
    $1 -   e^{-c_6 n}$ over $t$ and $S$,
      \[
        \E_{(x,y) \sim Q_n}\left[(A(C(S))(x)-y)^2\right] - 
        \E_{(x,y) \sim Q_n}\left[(f^*(X)-Y)^2\right]
          \ge c_7
      \]
    where $f^*$ is the regression function for $D_n$ (and hence also
    for $Q_n$).
  \end{lemma}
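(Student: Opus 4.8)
The plan is to exploit the fact that the least-norm interpolant of the compressed sample passes \emph{exactly} through the single noisy label recorded at each point of $U$ that the sample hits exactly once, so that its conditional risk there exceeds the Bayes risk $1/81$ by that label's squared noise; a Poissonization argument then shows that a constant fraction of the mass of $Q_n$ sits on such ``singleton'' points, and two concentration bounds turn the resulting average into a constant lower bound with probability $1-e^{-c_6 n}$.

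Since the $x$-marginal of $D_n$ is a nondegenerate Gaussian on $\R^d$ with $d=\floor{\sqrt n}^4>bn$ for all large $n$, almost surely over the draw defining $Q_n$ the points $x_1,\dots,x_{bn}$ are distinct and every subset of them is linearly independent; fix such a $Q_n$, so that its $x$-marginal is uniform on the $bn$ distinct points of $U$. Drawing $t\sim\Poi(cn)$ points from $Q_n$ and recording the per-point multiplicities is precisely the balls-into-bins process of Lemma~\ref{l:poissonization} with Poisson parameter $cn$ and $m=bn$ bins, so the multiplicities $(N_1,\dots,N_{bn})$ are i.i.d.\ $\Poi(c/b)$. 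Let $T=\{i:N_i=1\}$ and $p=\Pr[\Poi(c/b)=1]=(c/b)e^{-c/b}>0$; then $|T|=\sum_{i=1}^{bn}\indicator[N_i=1]$ is a sum of $bn$ i.i.d.\ Bernoulli$(p)$ variables, so a Chernoff bound gives $\Pr[\,|T|<pbn/2\,]=e^{-\Omega(n)}$.

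For each $i\in T$ the compression $C(S)$ contains $(x_i,y_i)$ with $y_i$ the unique sampled label at $x_i$; since $C(S)$ has at most $bn<d$ pairs with linearly independent $x$-values, the equality constraints defining $h:=A(C(S))$ are consistent and $h(x_i)=y_i$ for every $i\in T$. Under $Q_n$ the conditional law of $Y$ given $X=x_i$ is $\cN(\theta^*\cdot x_i,\,1/81)$ with regression function $f^*(x)=\theta^*\cdot x$, so $\E_{Y\mid X=x_i}[(g(x_i)-Y)^2]-1/81=(g(x_i)-\theta^*\cdot x_i)^2$ for every $g$, and since $R_{Q_n}^*=\E_{(x,y)\sim Q_n}[(f^*(x)-y)^2]=1/81$,
\[
R_{Q_n}(h)-R_{Q_n}^* \;=\; \frac1{bn}\sum_{i=1}^{bn}(h(x_i)-\theta^*\cdot x_i)^2 \;\ge\; \frac1{bn}\sum_{i\in T}(y_i-\theta^*\cdot x_i)^2,
\]
where the nonnegative terms from non-singleton bins were discarded.

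Conditioned on the multiplicity vector, and hence on $T$, the labels $\{y_i:i\in T\}$ are independent with $y_i\sim\cN(\theta^*\cdot x_i,1/81)$, because membership of a bin in $T$ is a function of the bin assignments alone, not of the labels drawn into the bins. Hence $81\sum_{i\in T}(y_i-\theta^*\cdot x_i)^2\sim\chi^2_{|T|}$, and the chi-square lower-tail bound gives that, conditionally on $|T|$, the event $\sum_{i\in T}(y_i-\theta^*\cdot x_i)^2<|T|/162$ has probability $e^{-\Omega(|T|)}$. Outside the union of these two failure events, which has probability at most $e^{-c_6 n}$ for a suitable $c_6>0$ since $|T|\ge pbn/2=\Omega(n)$ there,
\[
R_{Q_n}(h)-R_{Q_n}^* \;\ge\; \frac1{bn}\cdot\frac{|T|}{162} \;\ge\; \frac{p}{324} \;=:\; c_7>0.
\]
The step needing the most care is the conditional-independence argument: one must cleanly separate the ``which bin'' randomness, which fixes $T$, from the ``which label'' randomness, which given $T$ leaves a fresh $\cN(\theta^*\cdot x_i,1/81)$ at each singleton, and one must confirm that a genuinely exact interpolant exists---$d>bn$ and linearly independent surviving $x$-values---so that $h(x_i)=y_i$ at singletons rather than only approximately.
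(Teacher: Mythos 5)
Your proof is correct and follows essentially the same route as the paper: use Poissonization to show a $\Theta(n)$ fraction of the atoms of $Q_n$ are singletons of $S$, note that the interpolant exactly reproduces the noisy label at every singleton, and conclude a constant excess risk. Your version is somewhat more careful than the paper's on two points. First, you explicitly verify that $h(x_i)=y_i$ at singletons by checking consistency of the interpolation constraints ($d=\floor{\sqrt n}^4>bn$ and the $x_i$'s almost surely linearly independent); the paper only remarks that $A(S)$ is determined by the equality constraints in $C(S)$ without noting feasibility. Second, and more substantively, the paper reasons in expectation over the label noise at singletons (``the expected quadratic loss \dots is twice its variance'') and then writes the pointwise lower bound $\ge |U_1|/(bn)$ as though no further concentration were needed, whereas the lemma statement is a high-probability claim; you close that gap by conditioning on the multiplicity vector so that $\{y_i:i\in T\}$ remain independent Gaussians, and then applying a $\chi^2$ lower-tail bound to control $\sum_{i\in T}(y_i-\theta^*\cdot x_i)^2$. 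This yields the $1-e^{-c_6 n}$ probability cleanly from two exponential failure events (too few singletons, or the chi-square undershooting), which is exactly the concentration step the paper glosses over.
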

\begin{proof}
Recall that
$U=\{x_1,\ldots,x_{bn}\}$ is the support of the marginal of $Q_n$
on the independent variables. With probability $1$, $U$ has
cardinality $b n$. Define $h = A(C(S))$.  If some $x\in U$ appears
exactly once in $S$, then $h(x)$ is a sample from the distribution
of $y$ given $x$ under $D_n$.  Thus, for such an $x$, the expected
quadratic loss of $h(x)$ on a test point is the squared difference
between two independent samples from this distribution, which is twice
its variance, i.e.\ twice the expected loss of $f^*$, which is $2
\times 1/2 = 1$.  On any $x\in U$, whether or not it was seen exactly
once in $S$, by definition, $f^*(x)$ minimizes the expected loss given
$x$.

Lemma~\ref{l:poissonization} shows that, conditioned on the random
choice of $Q_n$,
the numbers of times the various $x$ in $U$ in $S$ are mutually
independent and, 
the probability that
$x \in U$
is seen exactly once in $S$ is 
$\frac{c}{b} \exp\left(-\frac{c}{b}\right)
 \geq \frac{c e^{-c}}{b}$.
Applying a Chernoff bound (see, for example, Theorem 4.5 in~\cite{MU}),
the probability that fewer than 
$ce^{-c}n/2$
members of $U$ are seen exactly once in $S$ is at most 
$e^{-c_6 n}$ for an absolute constant $c_6$.
Thus if $U_1$ is the (random) subset of points in $U$ that were seen
exactly once, we have
\begin{align*}
& Q_n\left[(h(X)-Y)^2\right] 
    -
    Q_n\left[(f^*(X)-Y)^2\right] \\
 & = \sum_{x \in U} 
  Q_n\left[((h(X)-Y)^2 - (f^*(X)-Y)^2) 1_{X = x}\right]  \\
& \geq \sum_{x \in U_1} 
                        \E[ (f^*(X)-Y)^2 1_{X = x} ] \\
& = \frac{|U_1|}{bn}.
\end{align*}
Since, with probability $1 - e^{- c_6 n}$, 
$|U_1| \geq ce^{-c}n/2$, this completes the proof.
\end{proof}

\begin{definition}
\label{e:good.same_marginal}
Define $P_n$ as follows.
  \begin{enumerate}
    \item Set the marginal distribution of $P_n$ on $\R^d$ the same as
    that of $D_n$.
    \item To generate $Y$ given $X=x$ for $(X,Y) \sim P_n$,
    first sample a random variable $Z$ whose distribution is
    obtained by conditioning a draw from a Poisson with mean 
    $\frac{c}{b}$
    on the event that it is at least $1$, then
    sample $Z$ values $V_1,\ldots,V_{Z}$ from the conditional
    distribution $D_n(Y|X=x)$, and set
  $
        Y = \frac{1}{Z} \sum_{i=1}^Z V_j.
  $
  \end{enumerate}
\end{definition}
Note that, since $D_n$ has a
density, $x_1,...,x_r$ are almost surely distinct and hence
$S$ drawn from $P_n^r$ has $C(S)=S$ a.s.

The following lemma implies that the bounds for
$P_n$ tend to be as big as those for $Q_n$.
\begin{lemma}
\label{l:easy.by.hard}
Define $Q_n$ as above let $\cQ_n$ be the resulting distribution over the random
choice of $Q_n$.  Suppose $P_n$ is defined as in
Definition~\ref{e:good.same_marginal}.
Let $c > 0$ be an arbitrary constant.
Choose $S$ randomly by choosing $t \sim \Poi(c n)$, $Q_n\sim\cQ_n$ and
$S\sim Q_n^t$. Choose $T$ by choosing
$r \sim B\left(bn, 1 - \exp\left(-\frac{c}{b}\right)\right)$ and
$T\sim P_n^r$. 
Then $C(S)$ and $T$ have the same distribution.  In particular,
for all $\delta > 0$,
for any function $\psi$ of the least norm interpolant $h$,
a sample size $r$, and a confidence parameter $\delta$, we have
\[
\E_{t \sim \Poi(c n),Q_n \sim \cQ_n} 
 [\E_{S \sim Q_n^t} [\psi(h(S),|C(S)|,\delta)]]
=
\E_{r \sim B\left(bn, 1 - \exp\left(-\frac{c}{b}\right)\right)} 
       [\E_{T \sim P_n^r} [\psi(h(T),r,\delta)].
\]
\end{lemma}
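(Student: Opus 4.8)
The plan is to use the Poissonization lemma to reorganize the draw of $S$ bin by bin, and then to recognize the resulting per-bin structure as exactly the generative process that defines $P_n$. First I would fix the random choice of $Q_n$ -- equivalently, of $U=\{x_1,\dots,x_{bn}\}$ -- and unpack what drawing from $Q_n$ means: pick an index $i$ uniformly from $\{1,\dots,bn\}$ and then a label from $D_n(Y\mid X=x_i)$. So drawing $t\sim\Poi(cn)$ points from $Q_n$ amounts to throwing $t\sim\Poi(cn)$ balls into the $bn$ bins $x_1,\dots,x_{bn}$ and, for each ball landing in bin $i$, attaching an independent label drawn from $D_n(Y\mid X=x_i)$. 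By Lemma~\ref{l:poissonization} applied with mean $cn$ and $bn$ bins, the bin counts $t_1,\dots,t_{bn}$ are i.i.d.\ $\Poi(c/b)$, independently of the attached labels. Since $D_n$ has a density the $x_i$ are almost surely distinct, so the distinct first coordinates appearing in $S$ are precisely $\{x_i:t_i\ge 1\}$, and
\[
C(S)=\bigl((x_i,\bar v_i)\bigr)_{i:\,t_i\ge 1},
\qquad
\bar v_i=\frac{1}{t_i}\sum_{k=1}^{t_i}V_{i,k},
\]
where, conditionally on $U$ and on the counts, the $V_{i,k}$ are mutually independent with $V_{i,k}\sim D_n(Y\mid X=x_i)$.

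Next I would identify the law of $C(S)$, now also averaging over $U$. Let $r=\#\{i:t_i\ge 1\}$; each bin is nonempty independently with probability $1-\Pr[\Poi(c/b)=0]=1-e^{-c/b}$, so $r\sim B(bn,1-e^{-c/b})$, matching the law of $r$ in the definition of $T$. Condition on $r$ and on which bins are nonempty. Three observations finish the job: (i) the counts in the nonempty bins are i.i.d.\ draws from $\Poi(c/b)$ conditioned on being at least $1$, which is exactly the law of $Z$ in Definition~\ref{e:good.same_marginal}; (ii) the first coordinates $\{x_i:t_i\ge 1\}$ are i.i.d.\ from the marginal of $D_n$ on $\R^d$, because the $x_i$ were i.i.d.\ from that marginal and the ball-throwing -- hence the event ``bin $i$ is nonempty'' -- does not look at their values; and (iii) for each nonempty bin, given its first coordinate $x_i$ and its count $Z_i$, the second coordinate $\bar v_i$ is the average of $Z_i$ i.i.d.\ samples from $D_n(Y\mid X=x_i)$, i.e.\ a draw from $P_n(Y\mid X=x_i)$, and these are independent across nonempty bins. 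Combining (i)--(iii), conditionally on $r$ the sample $C(S)$ consists of $r$ i.i.d.\ draws from $P_n$, so marginally $C(S)$ and $T$ have the same distribution as samples. (Since the least-norm interpolation algorithm depends on its input only through the compression, and is therefore permutation-invariant, no care about ordering is needed here.)

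For the ``in particular'' equality, I would note that because $P_n$ has a density a draw $T\sim P_n^r$ almost surely has $r$ distinct first coordinates, so $C(T)=T$ and $|C(T)|=r$ almost surely, and $A(T)=A(C(T))$; also $C$ is idempotent, so $|C(C(S))|=|C(S)|$. Writing $g(\sigma)=\psi\bigl(A(\sigma),|C(\sigma)|,\delta\bigr)$ for a sample $\sigma$, the left-hand side equals $\E[g(C(S))]$ and the right-hand side equals $\E[g(T)]$, and these coincide because $C(S)$ and $T$ have the same law. The main obstacle is purely one of bookkeeping: there are three independent sources of randomness -- the draw of $U$, the ball-throwing, and the labels -- and they must be disentangled carefully, especially for claim (ii), which hinges on the Poissonization being performed independently of the $x_i$ values.
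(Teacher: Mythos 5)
Your proof is correct and follows essentially the same route as the paper's: condition on $U$, apply the Poissonization lemma to make the bin counts i.i.d.\ $\Poi(c/b)$, identify the number of nonempty bins as $B(bn,1-e^{-c/b})$, recognize the per-bin count (conditioned on being nonempty) as the law of $Z$ in Definition~\ref{e:good.same_marginal}, and note that the surviving $x_i$'s remain i.i.d.\ from the marginal of $D_n$ since the rejection is independent of their values. Your write-up is somewhat more careful than the paper's --- you explicitly separate the three sources of randomness, spell out claims (i)--(iii) conditionally on $r$ and on the set of nonempty bins, and you address the ordering/exchangeability point via permutation-invariance of the compression --- but the decomposition and the key lemma are the same, so this is a more detailed rendering of the paper's argument rather than a different one.
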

\begin{proof}
Let $\cC$ be the probability distribution over training sets obtained
by picking $Q_n$ from $\cQ_n$, 
picking $t$ from $\Poi(cn)$,
picking $S$ from $Q_n^t$ and compressing it.
Let $C = C(S)$ be a random draw from $\cC$.  Let
$n_C$ be the number of examples in $C$.

We claim that $n_C$ is distributed as 
$B\left(bn, 1 - \exp\left(-\frac{c}{b}\right)\right)$.  
Conditioned on $Q_n$, 
and recalling that $U$ is the support of $Q_n$,
for any $x \in U$, 
Lemma~\ref{l:poissonization} implies that
for each $x \in U$, the probability that
$x$ is not seen is the probability, under a 
Poisson with mean $\frac{c}{b}$, of drawing a $0$.  Thus,
the probability that $x$ is seen is $1 - \exp\left(-\frac{c}{b}\right)$.
Since the numbers of times different $x$ are seen in $S$
are independent, the number seen is distributed as 
$B\left(bn, 1 - \exp\left(-\frac{c}{b}\right)\right)$.  

Now, for each $x \in U$, the event that
it is in $C(S)$ is the same as the event that
at appears at least once in $S$.  Thus, conditioned on
the event that $x$ appears in $S$, the number of
$y$ values 
that are used to compute the
$y$ value in $C(S)$ is distributed as a Poisson
with mean $\frac{c}{b}$, conditioned on having a value at least
$1$.  

Let $D_{n,X}$ be the marginal distribution of $D_n$ on the $x$'s.
If we make $n$ independent draws from $D_{n,X}$, and then
independently reject some of these examples, to get $n_C$ draws,
the resulting $n_C$ examples are independent.
(We could first randomly decide the number $n_C$ of
examples to keep, and then draw those independently
from $D_n$, and we would have the same distribution.)

The last two paragraphs together, along with the
definition of $Q_n$, imply that the distribution over 
$T$ obtained by sampling $r$ from 
$B(bn, 1 - e^{-c/b})$ and $T$ from $P_n^r$ is the same as the distribution
over $C$ obtained by sampling $Q_n$ from $\cQ_n$, 
$t$ from $\Poi(cn)$,
then sampling $S$ from $Q_n^t$ and compressing it.  
Thus, the distributions of $T$ and $C(S)$ are the same, and hence the
distributions of $(h(T),|T|)$ and $(h(S),|C(S)|)$ are the same,
because $h(S)=h(C(S))$.
\end{proof}

We will use the following bound on 
a tail of the Poisson distribution.
\begin{lemma}[\citep{canonne2017short}]
\label{l:poisson.tail}
For any $\lambda,\alpha>0$,
$\Pr_{r \sim \Poi(\lambda)} (r \geq (1+\alpha)\lambda) \leq \exp
\left(-\frac{\alpha^2}{2(1+\alpha)}\lambda\right)$.
\end{lemma}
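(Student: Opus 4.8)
The plan is to prove this by the exponential-moment (Chernoff) method, followed by one elementary one-variable inequality to extract the stated constant.

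First I would record the moment generating function of $r \sim \Poi(\lambda)$: summing the Poisson pmf gives $\E\!\left[e^{tr}\right] = \exp\!\left(\lambda(e^{t}-1)\right)$ for every real $t$. Then, for any $t>0$, applying Markov's inequality to the nonnegative random variable $e^{tr}$ yields
\[
\Pr_{r \sim \Poi(\lambda)}\!\left(r \geq (1+\alpha)\lambda\right)
  \ \leq\ e^{-t(1+\alpha)\lambda}\,\E\!\left[e^{tr}\right]
  \ =\ \exp\!\left(\lambda\bigl(e^{t} - 1 - t(1+\alpha)\bigr)\right).
\]
Next I would optimize the exponent over $t>0$. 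The map $t \mapsto e^{t} - 1 - t(1+\alpha)$ is convex with a unique minimum at $t = \ln(1+\alpha)$, which is positive since $\alpha>0$; substituting $e^{t}=1+\alpha$ gives the classical Chernoff bound for the Poisson upper tail,
\[
\Pr_{r \sim \Poi(\lambda)}\!\left(r \geq (1+\alpha)\lambda\right)
  \ \leq\ \exp\!\left(-\lambda\bigl((1+\alpha)\ln(1+\alpha) - \alpha\bigr)\right).
\]

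It then remains to show that the exponent obtained dominates the one claimed, i.e.
\[
(1+\alpha)\ln(1+\alpha) - \alpha \ \geq\ \frac{\alpha^{2}}{2(1+\alpha)} \qquad \text{for all } \alpha>0.
\]
I would prove this by letting $\phi(\alpha)$ denote the difference of the two sides and arguing by convexity: $\phi(0)=0$, a short computation gives $\phi'(\alpha) = \ln(1+\alpha) - \tfrac12 + \tfrac{1}{2(1+\alpha)^{2}}$ so that $\phi'(0)=0$, and $\phi''(\alpha) = \frac{\alpha(2+\alpha)}{(1+\alpha)^{3}} > 0$ for $\alpha>0$; hence $\phi'\geq 0$ and therefore $\phi\geq 0$ on $(0,\infty)$. (Equivalently, the substitution $u=1+\alpha\geq 1$ reduces the inequality to $\ln u \geq \frac{(u-1)(3u-1)}{2u^{2}}$, provable the same way; or one can simply invoke \citep{canonne2017short}.)

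There is no genuine obstacle here. The only nonmechanical ingredient is the elementary inequality in the last step, and even that is a one-line second-derivative argument. The single point to be careful about is using the exact optimizer $t=\ln(1+\alpha)$ rather than a cruder choice of $t$, since only the sharp Chernoff exponent $(1+\alpha)\ln(1+\alpha)-\alpha$ is large enough to dominate $\frac{\alpha^{2}}{2(1+\alpha)}$ uniformly over $\alpha>0$.
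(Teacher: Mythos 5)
Your proof is correct. The paper does not prove this lemma itself---it simply cites \citep{canonne2017short}---and your argument (Chernoff method with the exact optimizer $t=\ln(1+\alpha)$, followed by the elementary second-derivative verification that $(1+\alpha)\ln(1+\alpha)-\alpha \geq \frac{\alpha^{2}}{2(1+\alpha)}$) is precisely the standard derivation given in that reference; all your calculus checks out.
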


Armed with these tools, we now show that
$\epsilon$ must often have a large value.
\begin{lemma}
\label{l:bound_big}
Then there are positive constants $c_1,c_2,c_3,c_4$ such that,
for all $0 < \delta < c_1$,
the set of $n$ such that
\[
\Pr_{S\sim P_n^n}
  \left[\epsilon(h,n,\delta) > c_2\right] \geq \frac{1}{2}
\]
is strongly $(1 - \frac{c_3}{\log(1/\delta)})$-dense beyond $c_4\log(1/\delta)$.
\end{lemma}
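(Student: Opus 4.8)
The plan is to work block by block. On a block $B_s:=\{s^2,\dots,(s+1)^2-1\}$ the distributions $D_n$ and (once the constant $c$ used in Definition~\ref{e:good.same_marginal} is fixed for the block) $P_n$ depend on $n$ only through $\lfloor\sqrt n\rfloor=s$, hence are constant on $B_s$; so it suffices to show that, whenever $s^2\ge c_4\log(1/\delta)$, a $(1-c_3/\log(1/\delta))$ fraction of $n\in B_s$ satisfy $\Pr_{S\sim P_n^n}[\epsilon(A(S),n,\delta)>c_2]\ge 1/2$. Fix such an $s$, put $\nu:=(s+1)^2-1\in B_s$, and choose the constant $c=c_s$ so that $\mu:=b\nu(1-e^{-c/b})$ is the midpoint of $B_s$. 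This is possible, and as $s$ varies $c_s$ stays in a fixed compact subinterval of $(0,\infty)$ (it converges to $-b\ln(1-1/b)$), so the constants $c_6,c_7$ of Lemma~\ref{lemma:Qbad.poisson} may be taken the same for all of them. With this choice, $D_\nu$ and $P_\nu$ equal $D_n$ and $P_n$ for every $n\in B_s$.

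\emph{Step 1: badness on $Q_\nu$ forces $\epsilon$ to be large.} Draw $Q_\nu\sim\cQ_\nu$, then $t\sim\Poi(c\nu)$, then $S\sim Q_\nu^t$; set $h:=A(S)=A(C(S))$ and $k:=|C(S)|$. By Lemma~\ref{l:unit}, $Q_\nu$ is unit scale except with probability $c_5/\nu$, and on that event the defining inequality of a uniform model-dependent bound holds at every fixed sample size $j$; averaging the resulting $1-\delta$ statements over $j\sim\Poi(c\nu)$ gives $\Pr_{t,S}[R_{Q_\nu}(h)-R_{Q_\nu}^*\le\epsilon(h,t,\delta)]\ge 1-\delta$. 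By Lemma~\ref{lemma:Qbad.poisson} (with our $c$), for almost every $Q_\nu$ we have $\Pr_{t,S}[R_{Q_\nu}(h)-R_{Q_\nu}^*\ge c_7]\ge 1-e^{-c_6\nu}$. Since $k$ and $t$ have means $\mu$ and $c\nu$ with $\mu/(c\nu)=b(1-e^{-c/b})/c$ bounded below by a constant exceeding $1/2$ for our $c$, a Chernoff bound together with Lemma~\ref{l:poisson.tail} gives $\Pr_{t,S}[k\ge t/2]\ge 1-e^{-\Omega(\nu)}$. On the intersection of these events we have $\epsilon(h,t,\delta)\ge c_7$, whence, since $k\le t\le 2k$, the $c_{\mathrm{am}}$-bounded antimonotonicity of $\epsilon$ gives $\epsilon(h,k,\delta)\ge c_7/c_{\mathrm{am}}=:2c_2$. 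Writing $\rho:=c_5/\nu+\delta+e^{-\Omega(\nu)}$, a union bound yields
\[
\Pr_{t\sim\Poi(c\nu),\ Q_\nu\sim\cQ_\nu,\ S\sim Q_\nu^t}\!\left[\epsilon(A(S),|C(S)|,\delta)\ge 2c_2\right]\ \ge\ 1-\rho .
\]

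\emph{Step 2: transfer to $P_n$ and de-randomize the sample size.} Apply Lemma~\ref{l:easy.by.hard} with this $c$ to $\psi(g,j,\delta):=\indicator[\epsilon(g,j,\delta)\ge 2c_2]$. Since $h(S)=A(C(S))$, the left side of its identity is exactly the probability just displayed, so
\[
\Pr_{r\sim B(b\nu,\,1-e^{-c/b}),\ T\sim P_\nu^r}\!\left[\epsilon(A(T),r,\delta)\ge 2c_2\right]\ \ge\ 1-\rho .
\]
Let $g(j):=\Pr_{T\sim P_\nu^j}[\epsilon(A(T),j,\delta)\ge 2c_2]$; then $\sum_j\Pr[r=j]\,g(j)\ge 1-\rho$, hence $\sum_{j:\,g(j)<1/2}\Pr[r=j]\le 2\rho$. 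Now $r\sim B(b\nu,p)$ with $p=1-e^{-c/b}$ and $\Var(r)=\mu(1-p)\asymp\nu$; by the choice of $c_s$ the window $I:=\{j:|j-\mu|\le\kappa\sqrt{\Var(r)}\}$ contains $B_s$ for a suitable absolute constant $\kappa$, and a local-limit estimate for the binomial pmf (valid once $\nu$ is large, which we may assume) gives $\Pr[r=j]\ge c''/\sqrt\nu$ for $j\in I$. Hence at most $2\rho\sqrt\nu/c''=O(\rho s)$ values $j\in B_s$ have $g(j)<1/2$, i.e.\ a fraction $O(\rho)$ of $B_s$. For $j\in B_s$ we have $P_\nu=P_j$, so $g(j)\ge 1/2$ is exactly $\Pr_{S\sim P_j^j}[\epsilon(A(S),j,\delta)\ge 2c_2]\ge 1/2$, which implies $\Pr_{S\sim P_j^j}[\epsilon(A(S),j,\delta)>c_2]\ge 1/2$.

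So in every block $B_s$ with $s^2\ge c_4\log(1/\delta)$ the good $n$ form a fraction at least $1-C\rho$ for an absolute constant $C$; choosing $c_4$ large and $c_1$ small so that $\rho=c_5/\nu+\delta+e^{-\Omega(\nu)}\le c_3/(C\log(1/\delta))$ there (the term $c_5/\nu$ and the exponential are killed by $\nu\ge c_4\log(1/\delta)$, and $\delta$ is killed because $\delta\log(1/\delta)\to 0$ as $\delta\to 0$) gives the claimed strong $(1-c_3/\log(1/\delta))$-density beyond $c_4\log(1/\delta)$. The step I expect to be the main obstacle is Step 1 — chaining the validity of $\epsilon$, which holds only for each fixed sample size and each unit-scale distribution, with the Poissonized randomness and with the antimonotonicity bridge from the raw size $t$ to the compressed size $|C(S)|$, while keeping the ``almost every $Q_\nu$'' conditioning straight; the de-randomization in Step 2 is delicate but routine, and it is precisely the $c_5/\nu$ rate of Lemma~\ref{l:unit}, evaluated at the smallest admissible block $\nu\asymp\log(1/\delta)$, that limits the density to $1-\Theta(1/\log(1/\delta))$.
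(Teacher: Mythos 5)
Your proposal is correct and follows essentially the same route as the paper: compare the $\epsilon$-values on the $Q$-side and $P$-side via Lemma~\ref{l:easy.by.hard}, use Lemma~\ref{l:unit} plus the bounded antimonotonicity (bridging $t$ to $|C(S)|$) to show $\epsilon$ is forced large with high probability on the $Q$-side, then transfer to $P$ and de-randomize the binomial sample size using a local-limit lower bound on the pmf over the bin. The one technical difference is that you anchor each block at its right endpoint $\nu$ and vary $c=c_s$ per block to put the binomial mean at the block center, whereas the paper anchors at the center $n=s^2+s$ with the fixed choice $b=2$, $c=2\ln 2$ (so the mean is automatically $n$); your variant works, but it forces you to check that the constants produced by Lemma~\ref{lemma:Qbad.poisson} (and the Chernoff/Poisson-tail step) are uniform over the compact range of $c_s$, a small complication the paper's fixed-$c$ choice avoids.
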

\begin{proof}
We will think of the natural numbers as
being divided into bins $[1,2),[2,4), [4,7),...$
%
Let us focus our attention on one bin:
$\{ s^2, ..., (s+1)^2 - 1 \}$.  
Let $n$ denote the center of the bin, $n = s^2 + s$,
so that $s \sim \sqrt{n}$.

For a constant $c_8 > 0$ and any $\delta > 0$, 
Lemma~\ref{l:easy.by.hard}
implies
\begin{equation}
\label{e:easy.by.hard}
\E_{r \sim  B(bn, 1 - e^{-c/b})} [ \Pr_{T \sim P_n^r} [ \eps(h(T),r,\delta) 
        \leq c_8 ]]
 = \E_{t \sim \Poi(c n),Q_n \sim \cQ_n} 
   [\Pr_{S \sim Q_n^t} [\eps(h(S),|C(S)|,\delta) \leq c_8]].
\end{equation}
Suppose that $\epsilon$ is $B'$-bounded-antimonotonic. Fix $B>0$ such
that
 $B>B'$. Then
\begin{align}
\nonumber
 & \E_{t \sim \Poi(c n),Q_n \sim \cQ_n} [ \Pr_{S \sim Q_n^t} [\eps(h,|C(S)|,\delta) \leq c_8]] \\
\nonumber
 & = \E_{t \sim \Poi(c n),Q_n \sim \cQ_n} [ \Pr_{S \sim Q_n^t} [B\eps(h,|C(S)|,\delta) \leq c_8B]] \\
\nonumber
 & \leq
\E_{t \sim \Poi(c n),Q_n \sim \cQ_n} [\Pr_{S \sim Q_n^t} [R_{Q_n}(h) - R_{Q_n}^* > 
B\eps(h,|C(S)|,\delta)]] \\
\label{e:by.regrets}
 & \hspace{1.0in}
 + \E_{t \sim \Poi(c n),Q_n \sim \cQ_n} [ \Pr_{S \sim Q_n^t} [
 R_{Q_n}(h) - R_{Q_n}^* \leq c_8B ]].
\end{align}

For each sample size $t$ and any $Q_n$ that has unit scale
\begin{align*}
& \Pr_{S \sim Q_n^t} [R_{Q_n}(h) - R_{Q_n}^* > B \eps(h,|C(S)|,\delta)] \\
 & \le 
\Pr_{S \sim Q_n^t} \left[R_{Q_n}(h) - R_{Q_n}^* > \eps(h,t,\delta)\right]
+ \Pr_{S \sim Q_n^t} \left[
    B\eps(h,|C(S)|,\delta)\le \eps(h,t,\delta)\right] \\
 & \leq \delta + \Pr_{S \sim Q_n^t} [|C(S)| < t/2]
\end{align*}
where the second inequality follows from the fact that
$\epsilon$ is a valid $B'$-bounded-antimonotonic uniform model-dependent bound
for unit-scale distributions and $B > B'$.  Combining this with
Lemma~\ref{l:unit}, we have
\[
\E_{t \sim \Poi(c n),Q_n \sim \cQ_n} [\Pr_{S \sim Q_n^t} [R_{Q_n}(h) - R_{Q_n}^* > 
B\eps(h,|C(S)|,\delta)]]
 \leq \delta + \frac{c_5}{n} + \Pr_{S \sim Q_n^t} [|C(S)| < t/2].
\]
Now by a union bound, for some constant $c_9>0$,
\begin{align}
\lefteqn{\E_{t \sim \Poi(c n),Q_n \sim \cQ_n} [\Pr_{S \sim Q_n^t}
[|C(S)| < t/2]]} & \notag\\
    & \le \E_{t \sim \Poi(c n),Q_n \sim \cQ_n} [\Pr_{S \sim Q_n^t}
        [|C(S)| < c_9n]] + \Pr_{t \sim \Poi(c n)} [t/2 \geq c_9n] \notag\\
    & = \Pr_{Z \sim B(bn, 1 - e^{-c/b})} [Z < c_9n]
        + \Pr_{t \sim \Poi(c n)} [t/2 \geq c_9n] \notag\\
    & \le \delta,   \label{ineq:chernoffpoisson}
\end{align}
where the last inequality follows from a Chernoff bound
and from Lemma~\ref{l:poisson.tail} with $\alpha=1-2c_9/c$,
provided $n=\Omega(\log(1/\delta))$ and provided we can choose $c_9$
to satisfy $c/2<c_9<b(1-e^{-c/b})$. Our choice of $b$ and $c$, 
specified below, will
ensure this.
In that case, we have that
\begin{align*}
 & \E_{t \sim \Poi(c n),Q_n \sim \cQ_n} [\Pr_{S \sim Q_n^t} [\eps(h,|C(S)|,\delta) 
  \leq c_8]] \\
 & \leq
 2 \delta
 + \frac{c_5}{n}
 + \E_{t \sim \Poi(c n),Q_n \sim \cQ_n} [\Pr_{S \sim Q_n^t}
 [R_{Q_n}(h) - R_{Q_n}^* \leq c_8B]].
\end{align*}
Applying Lemma~\ref{lemma:Qbad.poisson} to bound the RHS, if $n$ is large enough
and $c_8 B < c_7$, then
\[
\E_{t \sim \Poi(c n),Q_n \sim \cQ_n} [\Pr_{S \sim Q_n^t} [\eps(h,|C(S)|,\delta) \leq c_8]]
  \leq 3 \delta + \frac{c_5}{n}.
\]
Returning to (\ref{e:easy.by.hard}), we get
\begin{equation}
\label{e:ave.prob.small.random_r}
\E_{r \sim  B(bn, 1 - e^{-c/b})} [\Pr_{T \sim P_n^r} [ \eps(h,r,\delta))\leq c_8 ]]
   \leq 3 \delta + \frac{c_5}{n}.
\end{equation}
Let us now focus on the case that
$b = 2$ and $c = 2 \ln 2$, so that
\[
\E_{r \sim  B(bn, 1 - e^{-c/b})}[r] = (1 - e^{-c/b})b n = n.
\]
(And note that $c/2=\ln 2 < 1 = b(1-e^{-c/b})$, as required
for~\eqref{ineq:chernoffpoisson}.)
Chebyshev's inequality
implies
\[
\Pr_{r \sim  B(bn, 1 - e^{-c/b})}[ r \not\in [n - s, n + s ]]
   \leq c_{10}
\]
for an absolute positive constant $c_{10}$.
Returning now to (\ref{e:ave.prob.small.random_r}),
Markov's inequality implies
\begin{equation}
\label{e:most.bad.binomial}
\Pr_{r \sim  B(bn, 1 - e^{-c/b})} 
  [\Pr_{T \sim P_n^r} [ \eps(h,r,\delta))\leq c_8 ]
   > 1/2]
   \leq c_{11} \left( \delta+ \frac{1}{n} \right).
\end{equation}
Further, it is known~\citep{slud1977distribution,box1978statistics}
that there is an
absolute constant $c_{12}$ such that, for all large
enough $n$ and all $r_0 \in [n - s, n + s ]$,
\[
\Pr_{r \sim  B(bn, 1 - e^{-c/b})}[ r = r_0 ]
 \geq \frac{c_{12}}{\sqrt{n}}.
\]
Combining this with (\ref{e:most.bad.binomial})
and recalling that $s$ and $s'$ are $\Theta(\sqrt{n})$, we get
\[
\frac{| \{ r \in [n - s, n + s] : 
     \Pr_{T \sim P_n^r} [ \eps(h,r,\delta))\leq c_8 ] > 1/2
      \} |}{2 s + 1} 
\leq c_{13} \left( \delta+ \frac{1}{n} \right)
\leq \frac{c_{14}}{\log(1/\delta)},
\]
for $n \geq c_4\log(1/\delta)$ and small enough $\delta$.
Since, for all $r \in [n - s, n + s]$, we have
$P_r = P_n$, this completes the proof.
\end{proof}

The following bound can be obtained
through direction application of the results in
\citep{bartlett2019benign}.  The details are
given in Appendix~\ref{a:P.interpolant_succeeds}.
\begin{lemma}
\label{l:P.interpolant_succeeds}
There is a constant $c$ such that, for all large enough
$n$, with probability at least $1-\delta$, for $S \sim P_n^n$,
the least-norm interpolant $h$ satisfies
$R_{P_n}(h) - R^*_{P_n} \leq c \sqrt{\frac{\log(1/\delta)}{n}}$.
\end{lemma}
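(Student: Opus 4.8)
The plan is to apply the excess-risk upper bound for the minimum-norm interpolant from \citep{bartlett2019benign} to the distribution $P_n$, after checking that the ``effective rank'' parameters of the covariance $\Sigma_s$ are such that every term in that bound is $O\!\big(\sqrt{\log(1/\delta)/n}\big)$ or smaller.

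First I would cast $P_n$ into the form required by \citep{bartlett2019benign}. Its covariate marginal is $\cN(0,\Sigma_s)$, which can be written $\Sigma_s^{1/2}z$ with $z\sim\cN(0,I_d)$ having independent (hence sub-Gaussian, with a universal constant) coordinates. Its regression function is the linear map $x\mapsto\theta^*\cdot x$, and the label noise $\xi=Y-\theta^*\cdot X$ is, by the definition of $P_n$, independent of $X$: given $X=x$ it equals $\frac1Z\sum_{j=1}^Z V_j-\theta^*\cdot x$, a scale mixture of the centered Gaussians $\cN(0,\tfrac1{81Z})$, where $Z$ is a $\Poi(c/b)$ variable conditioned on $Z\ge 1$. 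Each mixture component has variance at most $1/81$, so $\xi$ is $\tfrac19$-sub-Gaussian, and $\E[\xi^2\mid X]=\sigma_P^2:=\tfrac1{81}\E[1/Z]\in(0,\tfrac1{81}]$ is a positive constant; consequently $R^*_{P_n}=\sigma_P^2$ and $R_{P_n}(h)-R^*_{P_n}=(h-\theta^*)^\top\Sigma_s(h-\theta^*)$, which is exactly the quantity bounded in \citep{bartlett2019benign}. Since $d=s^4\ge bn$ for all large $n$, the interpolant exists, is a.s.\ unique, and (as already noted, because $C(S)=S$ a.s.) is the least-norm interpolant of $n$ distinct points, so the hypotheses of that reference hold with universal constants.

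Next I would compute the effective ranks. The eigenvalues of $\Sigma_s$ are $\lambda_1=1/81$ and $\lambda_2=\cdots=\lambda_d=1/d^2$ with $d=s^4$, and $n$ satisfies $s^2\le n<(s+1)^2$, so $n\asymp s^2$ and $d\asymp n^2$. Then $r_0(\Sigma_s)=\Tr(\Sigma_s)/\lambda_1=1+81(d-1)/d^2\le 2$, while for every $k\ge 1$ one computes $r_k(\Sigma_s)=\big(\sum_{i>k}\lambda_i\big)/\lambda_{k+1}=d-k$ and $R_k(\Sigma_s)=\big(\sum_{i>k}\lambda_i\big)^2/\sum_{i>k}\lambda_i^2=d-k$. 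Hence, for all large $n$, the threshold index $k^*=\min\{k\ge 0:r_k(\Sigma_s)\ge bn\}$ (with $b$ the universal constant of \citep{bartlett2019benign}) equals $1$: $r_0(\Sigma_s)\le 2<bn$ but $r_1(\Sigma_s)=s^4-1\ge bn$ since $n<(s+1)^2$. Plugging $\|\theta^*\|=1$, $\|\Sigma_s\|=1/81$, $r_0(\Sigma_s)=O(1)$, $k^*=1$, and $R_{k^*}(\Sigma_s)=s^4-1\asymp n^2$ into that bound, the ``signal'' term is $O\!\big(\max\{\sqrt{1/n},\,1/n,\,\sqrt{\log(1/\delta)/n}\}\big)=O\!\big(\sqrt{\log(1/\delta)/n}\big)$ and the ``noise'' term is $O\!\big(\sigma_P^2\log(1/\delta)(k^*/n+n/R_{k^*}(\Sigma_s))\big)=O(\log(1/\delta)/n)$; summing gives $R_{P_n}(h)-R^*_{P_n}\le c\sqrt{\log(1/\delta)/n}$ with probability at least $1-\delta$, for a universal $c$, which is the claim. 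The only restriction on $\delta$ inherited from \citep{bartlett2019benign} (essentially $\log(1/\delta)=O(n)$) is vacuous here because $\delta$ is fixed while $n\to\infty$.

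I do not expect a genuine obstacle: the eigenvalue profile of $\Sigma_s$ was engineered precisely so that $k^*=1$ and both $r_0(\Sigma_s)$ and $n/R_{k^*}(\Sigma_s)$ are negligible. The only care needed is bookkeeping---verifying that the scale-mixture-of-Gaussians noise meets whatever noise hypothesis the cited theorem imposes (sub-Gaussianity with a fixed constant, $\E[\xi\mid X]=0$, and conditional variance bounded above---and here also below---by absolute constants), and confirming that none of the constants ($b$, the sub-Gaussian parameters, the final $c$) depends on $n$ so that the bound is uniform over all large $n$. The delicate part of the overall argument lives in Lemmas~\ref{lemma:Qbad.poisson}--\ref{l:bound_big}, not here.
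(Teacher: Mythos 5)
Your proposal is correct and takes essentially the same route as the paper: apply the excess-risk bound of \citep{bartlett2019benign}, compute the effective ranks $r_0(\Sigma_s)=O(1)$, $r_k(\Sigma_s)=R_k(\Sigma_s)=d-k$ for $k\ge 1$, conclude $k^*=1$ and $R_{k^*}(\Sigma_s)=d-1=\Omega(n^2)$, and verify the sub-Gaussianity/variance conditions on the label noise. Your writeup is somewhat more explicit about checking that the scale-mixture noise is independent of $X$ and $1/9$-sub-Gaussian, but the substance is identical to the paper's appendix.
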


Combining this with Lemma~\ref{l:bound_big} proves
Theorem~\ref{t:main}.


\section{Acknowledgements}

We thank Andrea Montanari for alerting us to a flaw in an earlier
version of this paper.
We also thank Vaishnavh Nagarajan and Zico Kolter for helpful comments on
an earlier draft of this paper, and Dan Roy for
calling our attention to Lemma~5.2 of \citep{NDR20}.

\bibliographystyle{plainnat}
\bibliography{bib}

\begin{thebibliography}{36}
\providecommand{\natexlab}[1]{#1}
\providecommand{\url}[1]{\texttt{#1}}
\expandafter\ifx\csname urlstyle\endcsname\relax
  \providecommand{\doi}[1]{doi: #1}\else
  \providecommand{\doi}{doi: \begingroup \urlstyle{rm}\Url}\fi

\bibitem[Arora et~al.(2019)Arora, Du, Hu, Li, and Wang]{arora2019fine}
Sanjeev Arora, Simon~S. Du, Wei Hu, Zhiyuan Li, and Ruosong Wang.
\newblock Fine-grained analysis of optimization and generalization for
  overparameterized two-layer neural networks.
\newblock In \emph{ICML}, volume~97, pages 322--332, 2019.

\bibitem[Bartlett(1998)]{b-scpcnn-98}
P.~L. Bartlett.
\newblock The sample complexity of pattern classification with neural networks:
  the size of the weights is more important than the size of the network.
\newblock \emph{IEEE Transactions on Information Theory}, 44\penalty0
  (2):\penalty0 525--536, 1998.

\bibitem[Bartlett and Mendelson(2002)]{bm-rgcrbsr-02}
P.~L. Bartlett and S.~Mendelson.
\newblock Rademacher and {G}aussian complexities: Risk bounds and structural
  results.
\newblock \emph{Journal of Machine Learning Research}, 3:\penalty0 463--482,
  2002.

\bibitem[Bartlett et~al.(2017)Bartlett, Foster, and Telgarsky]{bft-snmbnn-17}
Peter Bartlett, Dylan Foster, and Matus Telgarsky.
\newblock Spectrally-normalized margin bounds for neural networks.
\newblock In \emph{NIPS}, pages 6240--6249, 2017.

\bibitem[Bartlett et~al.(2002)Bartlett, Boucheron, and
  Lugosi]{bartlett2002model}
Peter~L Bartlett, St{\'e}phane Boucheron, and G{\'a}bor Lugosi.
\newblock Model selection and error estimation.
\newblock \emph{Machine Learning}, 48\penalty0 (1-3):\penalty0 85--113, 2002.

\bibitem[Bartlett et~al.(2020)Bartlett, Long, Lugosi, and
  Tsigler]{bartlett2019benign}
Peter~L Bartlett, Philip~M Long, G{\'a}bor Lugosi, and Alexander Tsigler.
\newblock Benign overfitting in linear regression.
\newblock \emph{PNAS}, 2020.

\bibitem[Batu et~al.(2000)Batu, Fortnow, Rubinfeld, Smith, and
  White]{batu2000testing}
Tugkan Batu, Lance Fortnow, Ronitt Rubinfeld, Warren~D Smith, and Patrick
  White.
\newblock Testing that distributions are close.
\newblock In \emph{Proceedings 41st Annual Symposium on Foundations of Computer
  Science}, pages 259--269. IEEE, 2000.

\bibitem[Batu et~al.(2013)Batu, Fortnow, Rubinfeld, Smith, and
  White]{DBLP:journals/jacm/BatuFRSW13}
Tugkan Batu, Lance Fortnow, Ronitt Rubinfeld, Warren~D. Smith, and Patrick
  White.
\newblock Testing closeness of discrete distributions.
\newblock \emph{J. {ACM}}, 60\penalty0 (1):\penalty0 4:1--4:25, 2013.
\newblock \doi{10.1145/2432622.2432626}.
\newblock URL \url{https://doi.org/10.1145/2432622.2432626}.

\bibitem[Belkin et~al.(2019{\natexlab{a}})Belkin, Hsu, Ma, and Mandal]{BHMM19}
Mikhail Belkin, Daniel Hsu, Siyuan Ma, and Soumik Mandal.
\newblock Reconciling modern machine-learning practice and the classical
  bias–variance trade-off.
\newblock \emph{PNAS}, 116\penalty0 (32):\penalty0 15849--15854,
  2019{\natexlab{a}}.

\bibitem[Belkin et~al.(2019{\natexlab{b}})Belkin, Rakhlin, and Tsybakov]{BRT19}
Mikhail Belkin, Alexander Rakhlin, and Alexandre~B Tsybakov.
\newblock Does data interpolation contradict statistical optimality?
\newblock \emph{AISTATS}, pages 1611–--1619, 2019{\natexlab{b}}.

\bibitem[Birch(1963)]{birch1963maximum}
MW~Birch.
\newblock Maximum likelihood in three-way contingency tables.
\newblock \emph{Journal of the Royal Statistical Society: Series B
  (Methodological)}, 25\penalty0 (1):\penalty0 220--233, 1963.

\bibitem[Box et~al.(1978)Box, Hunter, Hunter, et~al.]{box1978statistics}
George~EP Box, William~H Hunter, Stuart Hunter, et~al.
\newblock \emph{Statistics for experimenters}, volume 664.
\newblock John Wiley and sons New York, 1978.

\bibitem[Buldygin and Kozachenko(1980)]{buldygin1980sub}
Valerii~V Buldygin and Yu~V Kozachenko.
\newblock Sub-gaussian random variables.
\newblock \emph{Ukrainian Mathematical Journal}, 32\penalty0 (6):\penalty0
  483--489, 1980.

\bibitem[Canonne(2017)]{canonne2017short}
Cl{\'e}ment~L Canonne.
\newblock A short note on {P}oisson tail bounds, 2017.

\bibitem[Cao and Gu(2019)]{cao2019generalization}
Yuan Cao and Quanquan Gu.
\newblock Generalization bounds of stochastic gradient descent for wide and
  deep neural networks.
\newblock In \emph{Advances in Neural Information Processing Systems}, pages
  10835--10845, 2019.

\bibitem[Derezi{\'n}ski et~al.(2019)Derezi{\'n}ski, Liang, and
  Mahoney]{derezinski2019exact}
Micha{\l} Derezi{\'n}ski, Feynman Liang, and Michael~W Mahoney.
\newblock Exact expressions for double descent and implicit regularization via
  surrogate random design.
\newblock \emph{arXiv preprint arXiv:1912.04533}, 2019.

\bibitem[Feldman(2020)]{feldman2020does}
Vitaly Feldman.
\newblock Does learning require memorization? a short tale about a long tail.
\newblock In \emph{Proceedings of the 52nd Annual ACM SIGACT Symposium on
  Theory of Computing}, pages 954--959, 2020.

\bibitem[Feller(1968)]{feller}
Willliam Feller.
\newblock \emph{An introduction to probability theory and its applications},
  volume~1.
\newblock John Wiley \& Sons, 1968.

\bibitem[Golowich et~al.(2018)Golowich, Rakhlin, and Shamir]{grs-siscnn-18}
Noah Golowich, Alexander Rakhlin, and Ohad Shamir.
\newblock Size-independent sample complexity of neural networks.
\newblock In \emph{COLT}, volume~75, pages 297--299, 2018.

\bibitem[Hastie et~al.(2019{\natexlab{a}})Hastie, Montanari, Rosset, and
  Tibshirani]{hastie2019surprises}
Trevor Hastie, Andrea Montanari, Saharon Rosset, and Ryan~J Tibshirani.
\newblock Surprises in high-dimensional ridgeless least squares interpolation.
\newblock \emph{arXiv preprint arXiv:1903.08560}, 2019{\natexlab{a}}.

\bibitem[Hastie et~al.(2019{\natexlab{b}})Hastie, Montanari, Rosset, and
  Tibshirani]{hmrt-shdrlsi-19}
Trevor Hastie, Andrea Montanari, Saharon Rosset, and Ryan~J. Tibshirani.
\newblock Surprises in high-dimensional ridgeless least squares interpolation.
\newblock Technical Report 1903.08560 [math.ST], arXiv, 2019{\natexlab{b}}.
\newblock URL \url{https://arxiv.org/abs/1903.08560}.

\bibitem[Haussler(1992)]{haussler1992}
David Haussler.
\newblock Decision theoretic generalizations of the {PAC} model for neural net
  and other learning applications.
\newblock \emph{Information and Computation}, 100\penalty0 (1):\penalty0
  78--150, 1992.

\bibitem[Li and Liang(2018)]{ll-lonnsgdsd-18}
Yuanzhi Li and Yingyu Liang.
\newblock Learning overparameterized neural networks via stochastic gradient
  descent on structured data.
\newblock In \emph{NeurIPS}, pages 8157--8166. 2018.

\bibitem[Li et~al.(2020)Li, Su, and Sejdinovic]{li2020benign}
Zhu Li, Weijie Su, and Dino Sejdinovic.
\newblock Benign overfitting and noisy features.
\newblock \emph{arXiv preprint arXiv:2008.02901}, 2020.

\bibitem[Liang and Rakhlin(2020)]{lr-jikrrcg-18}
Tengyuan Liang and Alexander Rakhlin.
\newblock Just interpolate: {{Kernel}} ``{{Ridgeless}}'' regression can
  generalize.
\newblock \emph{The Annals of Statistics}, 48\penalty0 (3):\penalty0
  1329--1347, 2020.

\bibitem[Long and Sedghi(2019)]{long2019size}
Philip~M Long and Hanie Sedghi.
\newblock Generalization bounds for deep convolutional neural networks.
\newblock \emph{ICLR}, 2019.

\bibitem[Lov{\'{a}}sz and Vempala(2007)]{lovasz-vempala-07}
L{\'{a}}szl{\'{o}} Lov{\'{a}}sz and Santosh~S. Vempala.
\newblock The geometry of logconcave functions and sampling algorithms.
\newblock \emph{Random Struct. Algorithms}, 30\penalty0 (3):\penalty0 307--358,
  2007.

\bibitem[Mitzenmacher and Upfal(2005)]{MU}
Michael Mitzenmacher and Eli Upfal.
\newblock \emph{Probability and Computing: Randomized Algorithms and
  Probabilistic Analysis}.
\newblock Cambridge University Press, 2005.

\bibitem[Nagarajan and Kolter(2019)]{DBLP:conf/nips/NagarajanK19}
Vaishnavh Nagarajan and J.~Zico Kolter.
\newblock Uniform convergence may be unable to explain generalization in deep
  learning.
\newblock In \emph{NeurIPS}, pages 11611--11622, 2019.

\bibitem[Negrea et~al.(2020)Negrea, Dziugaite, and Roy]{NDR20}
Jeffrey Negrea, Gintare~Karolina Dziugaite, and Daniel~M. Roy.
\newblock In defense of uniform convergence: Generalization via derandomization
  with an application to interpolating predictors.
\newblock \emph{ICML}, 2020.

\bibitem[Neyshabur et~al.(2015)Neyshabur, Tomioka, and Srebro]{nts-nbccnn-15}
Behnam Neyshabur, Ryota Tomioka, and Nathan Srebro.
\newblock Norm-based capacity control in neural networks.
\newblock In \emph{COLT}, pages 1376--1401, 2015.

\bibitem[Rivasplata(2012)]{rivasplata2012subgaussian}
Omar Rivasplata.
\newblock Subgaussian random variables: An expository note.
\newblock \url{https://sites.ualberta.ca/~omarr/publications/subgaussians.pdf},
  2012.
\newblock Downloaded 1/12/2021.

\bibitem[Slud(1977)]{slud1977distribution}
Eric~V Slud.
\newblock Distribution inequalities for the binomial law.
\newblock \emph{The Annals of Probability}, pages 404--412, 1977.

\bibitem[Tsigler and Bartlett(2020)]{tsigler2020benign}
A.~Tsigler and P.~L. Bartlett.
\newblock Benign overfitting in ridge regression, 2020.

\bibitem[Vershynin(2018)]{vershynin2018high}
Roman Vershynin.
\newblock \emph{High-dimensional probability: An introduction with applications
  in data science}, volume~47.
\newblock Cambridge university press, 2018.

\bibitem[Zhang et~al.(2017)Zhang, Bengio, Hardt, Recht, and
  Vinyals]{zbhrv-udlrrg-17}
Chiyuan Zhang, Samy Bengio, Moritz Hardt, Benjamin Recht, and Oriol Vinyals.
\newblock Understanding deep learning requires rethinking generalization.
\newblock In \emph{ICLR}, 2017.
\newblock URL \url{https://arxiv.org/abs/1611.03530}.

\end{thebibliography}

\appendix

\section{Proof of Lemma~\protect\ref{l:unit}}
\label{a:unit}

To prove Lemma~\ref{l:unit}, we will need some lemmas.  
The first is from \citep{buldygin1980sub}
(see \citep{rivasplata2012subgaussian}).
\begin{lemma}
\label{l:sum.scalars}
If $X_1$ is a sub-Gaussian random variable with parameter
$\sigma_1$, and $X_2$ is a (not necessarily independent)
sub-Gaussian random variable with parameter
$\sigma_2$, then $X_1 + X_2$ is sub-Gaussian 
with parameter $\sigma_1 + \sigma_2$.
\end{lemma}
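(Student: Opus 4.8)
The plan is to prove this by Hölder's inequality applied to the exponential moment, with a carefully chosen pair of conjugate exponents; crucially this needs no independence assumption, since Hölder's inequality holds for arbitrary jointly distributed random variables, which is exactly why the lemma can be stated without one. Reading the paper's definition of sub-Gaussianity in the scalar case ($k=1$), the hypothesis is that $\E[\exp(u(X_i - \E X_i))] \le \exp(u^2\sigma_i^2/2)$ for every $u \in \R$ and $i \in \{1,2\}$, and the goal is the same bound for $X_1 + X_2$ with $\sigma_1 + \sigma_2$ in place of $\sigma_i$. First I would dispose of the degenerate case $\sigma_1 = 0$ (the case $\sigma_2 = 0$ being symmetric): then $\E[\exp(u(X_1 - \E X_1))] \le 1$ for all $u$, while Jensen's inequality forces the reverse inequality, so $X_1 = \E X_1$ almost surely; hence $X_1 + X_2 - \E(X_1+X_2) = X_2 - \E X_2$ a.s.\ and the claim reduces to the sub-Gaussianity of $X_2$, now with parameter $\sigma_2 = \sigma_1 + \sigma_2$. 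So from here I would assume $\sigma_1, \sigma_2 > 0$.

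For the main case, fix $u \in \R$ (the case $u = 0$ being trivial, both sides equal to $1$), write $\bar X_i = X_i - \E X_i$, and note $\E[\exp(u(X_1+X_2-\E(X_1+X_2)))] = \E[\exp(u\bar X_1)\exp(u\bar X_2)]$. I would then apply Hölder's inequality with exponents $p = (\sigma_1+\sigma_2)/\sigma_1$ and $q = (\sigma_1+\sigma_2)/\sigma_2$, which are both greater than $1$ and satisfy $1/p + 1/q = 1$, to obtain
\[
\E[\exp(u\bar X_1)\exp(u\bar X_2)] \le \E[\exp(pu\bar X_1)]^{1/p}\,\E[\exp(qu\bar X_2)]^{1/q}.
\]
Applying the sub-Gaussian bound for $X_1$ with argument $pu$ and for $X_2$ with argument $qu$, the right-hand side is at most $\exp(pu^2\sigma_1^2/2)\exp(qu^2\sigma_2^2/2) = \exp\big(\tfrac{u^2}{2}(p\sigma_1^2 + q\sigma_2^2)\big)$, and substituting the chosen exponents gives $p\sigma_1^2 + q\sigma_2^2 = (\sigma_1+\sigma_2)\sigma_1 + (\sigma_1+\sigma_2)\sigma_2 = (\sigma_1+\sigma_2)^2$. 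Hence $\E[\exp(u(X_1+X_2-\E(X_1+X_2)))] \le \exp(u^2(\sigma_1+\sigma_2)^2/2)$ for all $u$, which is exactly the definition of sub-Gaussian with parameter $\sigma_1+\sigma_2$.

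I do not anticipate any real obstacle here: the only step that is not purely mechanical is guessing the right exponents, which one recovers by minimizing $p\sigma_1^2 + q\sigma_2^2$ subject to $1/p + 1/q = 1$ — a one-variable calculus problem whose optimum is at $p - 1 = \sigma_2/\sigma_1$. The only points deserving a sentence of care in a full write-up are the degenerate-parameter case and the trivial $u=0$ case, both handled above, together with a brief remark that the lack of an independence hypothesis in the statement is precisely what Hölder's inequality delivers.
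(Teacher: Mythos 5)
Your proof is correct, and it is essentially the standard argument. The paper itself does not prove this lemma; it cites it to Buldygin and Kozachenko (1980), via Rivasplata's expository note, and the proof in those references is precisely the H\"older's-inequality argument you give, with conjugate exponents $p = (\sigma_1+\sigma_2)/\sigma_1$ and $q = (\sigma_1+\sigma_2)/\sigma_2$ chosen so that $p\sigma_1^2 + q\sigma_2^2 = (\sigma_1+\sigma_2)^2$. Your handling of the degenerate case $\sigma_1 = 0$ via the equality case in Jensen's inequality is also clean and needed, since the H\"older exponents are undefined there.
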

This immediately implies the following.
\begin{lemma}
\label{l:sum.vectors}
If $X_1$ is a sub-Gaussian random vector with parameter
$\sigma_1$, and $X_2$ is a (not necessarily independent)
sub-Gaussian random vector with parameter
$\sigma_2$, then $X_1 + X_2$ is sub-Gaussian 
with parameter $\sigma_1 + \sigma_2$.
\end{lemma}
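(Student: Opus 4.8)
The plan is to deduce the vector statement from the scalar Lemma~\ref{l:sum.scalars} by testing against a single, arbitrary direction. The key elementary fact is a dictionary between the vector and scalar notions of sub-Gaussianity, read off directly from the definition in the preliminaries: a random vector $X$ over $\R^k$ is sub-Gaussian with parameter $\sigma$ if and only if, for every fixed $u \in \R^k$, the scalar random variable $u \cdot X$ is sub-Gaussian (in the one-dimensional sense) with parameter $\|u\|\,\sigma$. One direction is immediate: given the vector bound, for any $t \in \R$ apply it with the test vector $tu$ to get $\E[\exp(t(u\cdot X - \E[u\cdot X]))] = \E[\exp((tu)\cdot(X - \E X))] \le \exp(\|tu\|^2\sigma^2/2) = \exp(t^2(\|u\|\sigma)^2/2)$. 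The converse is obtained by taking $t = \|u\|$ and writing $u = \|u\|\,(u/\|u\|)$.

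With this dictionary in hand, the argument is three lines. First I would fix an arbitrary $u \in \R^k$ and note that $u\cdot X_1$ and $u\cdot X_2$ are scalar sub-Gaussian with parameters $\|u\|\sigma_1$ and $\|u\|\sigma_2$ respectively. Next I would apply Lemma~\ref{l:sum.scalars}---which does not require independence---to conclude that $u\cdot X_1 + u\cdot X_2 = u\cdot(X_1+X_2)$ is scalar sub-Gaussian with parameter $\|u\|\sigma_1 + \|u\|\sigma_2 = \|u\|(\sigma_1+\sigma_2)$. Finally I would evaluate the resulting one-dimensional moment generating function bound at $t = 1$ to obtain $\E[\exp(u\cdot(X_1+X_2 - \E[X_1+X_2]))] \le \exp(\|u\|^2(\sigma_1+\sigma_2)^2/2)$; since $u$ was arbitrary, this is exactly the assertion that $X_1 + X_2$ is a sub-Gaussian random vector with parameter $\sigma_1 + \sigma_2$.

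There is essentially no real obstacle here---as the paper's phrasing ``this immediately implies the following'' signals---beyond keeping the normalization consistent when passing between the vector parameter $\sigma_i$ and the scalar parameter $\|u\|\sigma_i$. The one mild point worth stating explicitly is that the absence of an independence hypothesis is inherited for free from Lemma~\ref{l:sum.scalars}: the scalars $u\cdot X_1$ and $u\cdot X_2$ are (possibly dependent) functions of the possibly dependent $X_1$ and $X_2$, and the scalar lemma already covers exactly that situation, so nothing extra is needed.
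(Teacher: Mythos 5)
Your proof is correct and takes essentially the same approach as the paper: the paper's one-line argument (``any projection of $X_1+X_2$ is the sum of the projections of $X_1$ and $X_2$'') is exactly the reduction to the scalar Lemma~\ref{l:sum.scalars} that you carry out, just stated more tersely. The dictionary between vector and scalar sub-Gaussian parameters that you spell out is the implicit normalization the paper leaves to the reader, and your handling of it is accurate.
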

\begin{proof}
Any projection of $X_1 + X_2$ is the sum
of the projections of $X_1$ and $X_2$, 
so this
follows from Lemma~\ref{l:sum.scalars}.
\end{proof}

\begin{lemma}
\label{l:parts}
For a random vector $X = (X_1,...,X_k)$, 
if $X_1$ is sub-Gaussian with parameter $1/3$,
$X_2$ is sub-Gaussian with parameter $1/3$, and
$(X_3,...,X_k)$ is sub-Gaussian with 
parameter $1/3$, then $X$ is sub-Gaussian
with parameter $1$.
\end{lemma}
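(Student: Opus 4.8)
The plan is to write $X$ as a pointwise (not merely distributional) sum of three random vectors in $\R^k$, each of them supported on one of the three coordinate blocks, and then to apply Lemma~\ref{l:sum.vectors} twice. Concretely, I would set $Y^{(1)} = (X_1,0,\ldots,0)$, $Y^{(2)} = (0,X_2,0,\ldots,0)$, and $Y^{(3)} = (0,0,X_3,\ldots,X_k)$, so that $X = Y^{(1)} + Y^{(2)} + Y^{(3)}$ identically. No independence among the blocks is required, which matters because the hypotheses say nothing about their joint behaviour; conveniently, Lemma~\ref{l:sum.vectors} (via Lemma~\ref{l:sum.scalars}) also does not assume independence.

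The one thing that needs checking is that each $Y^{(j)}$, regarded as a random vector in all of $\R^k$, is still sub-Gaussian with parameter $1/3$. Fix $u \in \R^k$, let $u'$ denote the restriction of $u$ to the coordinates of the $j$th block, and let $Z^{(j)}$ be that block of $X$ (so $Z^{(1)} = X_1$, $Z^{(2)} = X_2$, $Z^{(3)} = (X_3,\ldots,X_k)$). Then $u \cdot (Y^{(j)} - \E Y^{(j)}) = u' \cdot (Z^{(j)} - \E Z^{(j)})$, and since $Z^{(j)}$ is sub-Gaussian with parameter $1/3$ and $\| u' \| \le \| u \|$, I get $\E[\exp(u \cdot (Y^{(j)} - \E Y^{(j)}))] \le \exp(\| u' \|^2/18) \le \exp(\| u \|^2/18)$, which is precisely the defining inequality for $Y^{(j)}$ being sub-Gaussian with parameter $1/3$. (For $j = 1,2$ a scalar sub-Gaussian variable is just a one-dimensional sub-Gaussian vector, so this covers those cases too.)

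With that established, I would apply Lemma~\ref{l:sum.vectors} to $Y^{(1)}$ and $Y^{(2)}$ to get that $Y^{(1)} + Y^{(2)}$ is sub-Gaussian with parameter $1/3 + 1/3 = 2/3$, and then apply it once more to $Y^{(1)} + Y^{(2)}$ and $Y^{(3)}$ to conclude that $X = (Y^{(1)} + Y^{(2)}) + Y^{(3)}$ is sub-Gaussian with parameter $2/3 + 1/3 = 1$, as desired.

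The main obstacle is minor: it is the zero-padding step in the second paragraph, which reduces to the observation that deleting coordinates of a direction vector cannot increase its Euclidean norm, so embedding a sub-Gaussian random vector into a larger coordinate space does not increase its parameter. Everything else is a direct, mechanical application of the additivity lemma, so I do not anticipate any real difficulty.
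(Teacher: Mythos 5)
Your proof is correct and takes essentially the same approach as the paper's: the same decomposition of $X$ into three zero-padded block vectors, the same observation that zero-padding preserves the sub-Gaussian parameter, and the same double application of Lemma~\ref{l:sum.vectors}. You simply spell out the padding argument in more detail than the paper does.
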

\begin{proof}
Embedding a random vector into a higher-dimensional space
by adding components that always evaluate to zero does
not affect whether it is sub-Gaussian, or its
sub-Gaussian parameter.  Since
\[
X = (X_1,0,...,0) + (0, X_2,0,...,0) + (0,0,X_3,...,X_k),
\]
applying Lemma~\ref{l:sum.vectors} above completes the proof.
\end{proof}

Now, for $U \sim D_n^m$, the uniform distribution
$Q$ over $U$, and $(X_1,...,X_d,Y) \sim Q$, we now
would like to show that $X_1$ is sub-Gaussian
with parameter $1/3$.  We will use the
following known sufficient condition, which can
be recovered by tracing through the constants 
in the proof of Proposition 2.5.2 of
\cite{vershynin2018high}.
\begin{lemma}
\label{l:exp.sq}
If a random variable $X$ satisfies
$\E\left[\exp\left(\frac{18 X^2}{e} \right)\right] \leq 2$, then $X$ 
is sub-Gaussian with parameter $1/3$.
\end{lemma}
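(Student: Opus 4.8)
The plan is to pass from the single hypothesis $\E[\exp(cX^2)]\le 2$, with $c=18/e$, to the bound $\E[\exp(u(X-\E X))]\le\exp(u^2/18)$ for every $u\in\R$, which is exactly what ``sub-Gaussian with parameter $1/3$'' means in the sense defined earlier. Since the hypothesis concerns the uncentered variable, absorbing the centering is part of the job. I would proceed in four steps: (i) control $|\E X|$ and $\Var(X)$ by a Jensen argument; (ii) read off polynomial moment bounds for $X$ from individual terms of the Taylor expansion of $\exp(cX^2)$; (iii) transfer these to $Y:=X-\E X$; and (iv) reassemble the moment generating function of $Y$ from its moment series and check that it is dominated by $\exp(u^2/18)$.

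For (i): since $t\mapsto e^{ct}$ is convex, $\exp(c\,\E[X^2])\le\E[\exp(cX^2)]\le 2$, so $\E[X^2]\le\frac{\ln 2}{c}=\frac{e\ln 2}{18}$; hence $(\E X)^2$ and $\Var(X)=\E[X^2]-(\E X)^2$ are each at most $\frac{e\ln 2}{18}$. For (ii): expanding $\exp(cX^2)=\sum_{j\ge 0}\frac{c^jX^{2j}}{j!}$ and taking expectations term by term (all terms nonnegative) gives $\frac{c^k}{k!}\E[X^{2k}]\le\E[\exp(cX^2)]\le 2$, so $\E[X^{2k}]\le\frac{2\,k!}{c^k}$ and, via Stirling, $(\E[X^{2k}])^{1/(2k)}\lesssim\sqrt{k/18}$ for $k\ge 1$. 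For (iii): the degree-two moment of $Y$ is exactly $\Var(X)\le\frac{e\ln 2}{18}$ from (i), while for $p\ge 3$ the triangle inequality for $L^p$ norms together with monotonicity of $L^p$ norms gives $\|Y\|_p\le\|X\|_p+|\E X|\lesssim\sqrt p$. For (iv): since $\E Y=0$ the degree-one term vanishes, so $\E[\exp(uY)]=1+\tfrac{u^2}{2}\Var(X)+\sum_{k\ge 3}\frac{u^k\E[Y^k]}{k!}\le 1+\tfrac{u^2}{2}\Var(X)+\sum_{k\ge 3}\frac{|u|^k\|Y\|_k^k}{k!}$; substituting the moment bounds and estimating the factorials by Stirling, one checks this is at most $\exp(u^2/18)$.

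The main obstacle is step (iv): the parameter must come out as exactly $1/3$, each earlier step is lossy, and the slack is slim, so all constants have to be tracked with care. The two extremes are comfortable---the leading coefficient $\tfrac12\Var(X)\le\frac{e\ln 2}{36}\approx 0.0524$ is below $\frac1{18}\approx 0.0556$, and the asymptotic growth rate of the series is governed by $\frac{1}{4c}u^2=\frac{e}{72}u^2\approx 0.0378\,u^2$, again below $\frac1{18}u^2$---but one must still pin down the intermediate range of $u$, and keep the $2^{1/(2k)}$ and additive $|\E X|$ corrections from overspending the slack; in particular it is essential to use the exact variance, not the triangle bound, for the $k=2$ term. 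An alternative that avoids summing the series is to split on $|u|$: for $|u|$ large, complete the square via $ux\le cx^2+\frac{u^2}{4c}$ to get $\E[\exp(uX)]\le 2\exp(\frac{e}{72}u^2)$, hence $\E[\exp(uY)]\le 2\exp(\frac{e}{72}u^2-u\,\E X)$, and absorb the factor $2$ into the exponent because $u^2$ is bounded below in that regime; for $|u|$ small use the truncated moment series. Either way this is exactly the constant bookkeeping in the proof of Proposition~2.5.2 of~\cite{vershynin2018high}.
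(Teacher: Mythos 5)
Your proposal takes the same route as the paper: the paper gives no written proof of this lemma beyond the sentence immediately preceding it, which says the statement ``can be recovered by tracing through the constants in the proof of Proposition~2.5.2 of \citep{vershynin2018high}''---precisely the constant-tracking you outline. You correctly identify the two real subtleties (the hypothesis controls the uncentered $X$ while the sub-Gaussian definition concerns $X-\E X$, and the numerical slack is thin so both the small-$|u|$ and large-$|u|$ regimes need separate care), and your two alternative strategies---reassembling the moment generating function from the moment series, or completing the square for large $|u|$ plus a truncated series for small $|u|$---are exactly the standard ways to carry out Vershynin's argument with explicit constants.
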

Now we are ready to analyze the marginal distribution of the
first component.
\begin{lemma}
\label{l:gaussian.empirical.sub-gaussian}
For $U$ obtained from $m$ independent samples from
$\cN(0,\sigma^2)$ for $\sigma \leq 1/9$
if $Q$ is the uniform distribution
over $U$, then, with probability at least
$1 - \frac{3}{m}$, $Q$ is sub-Gaussian with parameter
$1/3$.
\end{lemma}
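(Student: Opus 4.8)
The plan is to reduce the statement to a routine concentration estimate via Lemma~\ref{l:exp.sq}. Write $U=\{g_1,\dots,g_m\}$ with $g_1,\dots,g_m\sim\cN(0,\sigma^2)$ i.i.d., and let $Q$ be the uniform distribution on $U$. By Lemma~\ref{l:exp.sq} it suffices to show that, with probability at least $1-\tfrac{3}{m}$ over the draw of $U$,
\[
\E_{X\sim Q}\!\left[\exp\!\left(\tfrac{18 X^2}{e}\right)\right]\le 2 .
\]
Since $Q$ is uniform on $U$, the left-hand side equals the empirical average $\tfrac{1}{m}\sum_{i=1}^m X_i$, where $X_i:=\exp\!\left(\tfrac{18 g_i^2}{e}\right)$ are i.i.d. nonnegative random variables, so the task is purely a concentration statement about an i.i.d. average.

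Next I would control the mean and second moment of a single $X_i$ using the standard Gaussian identity $\E_{g\sim\cN(0,\sigma^2)}[\exp(tg^2)]=(1-2t\sigma^2)^{-1/2}$, valid whenever $2t\sigma^2<1$. The hypothesis $\sigma\le 1/9$ gives $2t\sigma^2\le 2t/81$, which is below $1$ both for $t=18/e$ (yielding $\mu:=\E[X_i]=(1-2\sigma^2\cdot 18/e)^{-1/2}$) and for $t=36/e$ (yielding $\E[X_i^2]=(1-2\sigma^2\cdot 36/e)^{-1/2}$), because $2\cdot(36/e)/81=8/(9e)<1$. Substituting the extreme value $\sigma=1/9$ shows $\mu=(1-4/(9e))^{-1/2}<1.1$ for all $\sigma\le 1/9$, and $\E[X_i^2]=(1-8/(9e))^{-1/2}<1.22$; hence $\Var(X_i)=\E[X_i^2]-\mu^2$ is bounded by an absolute constant $v<\tfrac14$, and $2-\mu>\tfrac{9}{10}$.

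Finally, Chebyshev's inequality gives
\[
\Pr\!\left[\tfrac{1}{m}\sum_{i=1}^m X_i>2\right]
\le\Pr\!\left[\left|\tfrac{1}{m}\sum_{i=1}^m X_i-\mu\right|>2-\mu\right]
\le\frac{\Var(X_1)}{m\,(2-\mu)^2}
\le\frac{v}{m\,(2-\mu)^2}
\le\frac{3}{m},
\]
using $v<\tfrac14$ and $2-\mu>\tfrac{9}{10}$ (in fact the bound is much smaller than $3/m$). On the complementary event, Lemma~\ref{l:exp.sq} certifies that $Q$ is sub-Gaussian with parameter $1/3$, which is exactly the claim.

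The computations are routine, so there is no real obstacle; the one place that genuinely uses $\sigma\le 1/9$ rather than a weaker bound is the finiteness of $\E[X_i^2]=\E[\exp(36g^2/e)]$, which requires $2\cdot(36/e)\sigma^2<1$ — this is the constant I would double-check, and it is the only delicate point. If a failure probability decaying faster than $1/m$ were needed one could replace Chebyshev by a Chernoff bound, exploiting the uniform bound on the moment generating function of $X_i$ near $0$, but $3/m$ is all that is used downstream.
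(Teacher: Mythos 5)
Your proof is correct and follows essentially the same route as the paper's: reduce to Lemma~\ref{l:exp.sq}, compute the mean of the empirical average $\E_{x\sim Q}[\exp(18x^2/e)]$ and bound its variance via the Gaussian moment-generating-function identity, then apply Chebyshev to get the $3/m$ failure probability. The only cosmetic difference is that the paper bounds the variance by the second moment and phrases the Chebyshev threshold as mean plus $\sqrt{3}$ standard deviations (then checks this is at most $2$), whereas you center the deviation directly at $2$; the computations are equivalent.
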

\begin{proof}
Define $a=18/e$ and let $Z = \E_{x \sim Q} [\exp(a x^2)]$.

We have
\begin{align*}
\E_{S \sim \cN(0,\sigma)^m} [ Z ]
& = \E_{S \sim \cN(0,\sigma)^m} [ \E_{x \sim Q} [\exp(a x^2)]] \\
& = \E_{x \sim \cN(0,\sigma)} [\exp(a x^2)] \\
& = \frac{1}{\sqrt{2 \pi} \sigma} \int_{-\infty}^{\infty}
  e^{a x^2} \exp\left(-\frac{x^2}{2 \sigma^2}\right) \; dx \\
& = \frac{1}{\sqrt{2 \pi} \sigma} 
\int_{-\infty}^{\infty}
  \exp\left(-\left( \frac{1}{2 \sigma^2} - a \right)  x^2 \right) \; dx \\
& = \frac{1}{\sqrt{2 \pi} \sigma} 
\times \frac{\sqrt{\pi}}{\sqrt{\frac{1}{2 \sigma^2} - a}} \\
& = \frac{1}{\sqrt{1 - 2a \sigma^2}}. \\
\end{align*}
Similarly
\begin{align*}
\Var_S [ Z ]
& = \Var_S [ \E_{x \sim Q} [\exp(a x^2)]] \\
& = \frac{1}{m} \Var_{x \sim \cN(0,\sigma)} [\exp(a x^2)] \\
& \leq \frac{1}{m} \E_{x \sim \cN(0,\sigma)} [\exp(2a x^2)] \\
& = \frac{1}{m \sqrt{1 - 4a \sigma^2}}. \\
\end{align*}
By Chebyshev's inequality, 
\begin{align*}
\Pr\left[Z \geq 
\frac{1}{\sqrt{1 - 2a \sigma^2}}
+ \frac{1}{\sqrt{3} (1 - 4a \sigma^2)^{1/4}}\right]
  \leq \frac{3}{m}.
\end{align*}
For $\sigma \leq 1/9$, recalling that $a=18/e$ shows that
$\Pr\left[Z \geq 2 \right] \leq \frac{3}{m}$
and applying Lemma~\ref{l:exp.sq} completes the proof.
\end{proof}

\medskip

Armed with these lemmas, we are now ready to prove
Lemma~\ref{l:unit}.
For $S \sim D_n^m$, let $Q$ be the uniform
over $S$.  For $(X_1,...,X_d,Y) \sim Q$,
Lemma~\ref{l:gaussian.empirical.sub-gaussian}
implies that, with probability $1 - 6/m$, 
$X_1$ and $Y$ are both sub-Gaussian with parameter
$1/3$.  It remains to analyze $(X_2,...,X_d)$.  
Let $S'$ be the projections
of the elements of $S$ onto these coordinates.  With probability
at least $1 - 3/m$, for all $s' \in S'$,
$|| s' || \leq \log(em^2/3)/\sqrt{d}$;
see~\citep[Lemma~5.17]{lovasz-vempala-07}.  Recalling
that $d = \Theta(n^2)$, if $m = b n$, then, for all large
enough $n$, with probability 
$1 - 3/m$, $\max_{s' \in S'} || s || \leq 1/6$, which implies that
$(X_2,...,X_d)$ is sub-Gaussian with parameter $1/3$.  Putting
this together with the analysis of $X_1$ and $Y$, and applying
Lemma~\ref{l:parts}, completes the proof.

\section{Proof of Lemma~\protect\ref{l:P.interpolant_succeeds}}
\label{a:P.interpolant_succeeds}

The lemma follows from Theorem 1 of~\citep{bartlett2019benign};
before showing how to apply it, let us first restate a special
case of the theorem for easy reference.

\subsection{A useful upper bound}
\label{s:bllt}

The special case concerns the least-norm interpolant applied
to training data $(x_1,y_1),...,(x_n,y_n)$ drawn from a
joint distribution $P$ over $(x,y)$ pairs.  The marginal
distribution of $x$ is Gaussian with
covariance
$\Sigma$.  
There is a unit length $\theta^*$ such that,
for all $x$, the conditional distribution of $y$ given $x$ 
has mean $\theta^* \cdot x$
is 
sub-gaussian with parameter $1$
and variance at most $1$.

We will apply an
upper bound in terms of the eigenvalues 
$\lambda_1 \geq \lambda_2 \geq ...$ of $\Sigma$.  
The bound
is in terms of two notions of the effective rank of
the tail of this spectrum:
\[
r_k(\Sigma) = \frac{\sum_{i > k} \lambda_i}{\lambda_{k+1}},\;
R_k(\Sigma) = \frac{\left(\sum_{i > k} \lambda_i \right)^2}
                   {\sum_{i > k} \lambda_i^2}.
\]
The rank of $\Sigma$ is assumed to be greater than $n$.
\begin{lemma}
\label{l:bllt}
There are $b,c, c_1 > 1$ for which the following holds.
For all $n$, $P$ and $\Sigma$ defined as above,
write $k^* = \min\{ k \geq 0: r_k(\Sigma) \geq b n \}$.
Suppose that $\delta < 1$ with $\log(1/\delta) < n/c$.
If $k^* < n/c_1$, then, with probability at least
$1-\delta$, the least-norm interpolant 
$h$ satisfies
\[
R_{P}(h) - R_{P}^* \leq
c \left( \max\left\{ 
     \sqrt{ \frac{ r_0(\Sigma) }{n} }.
     \frac{ r_0(\Sigma) }{n}.
     \sqrt{ \frac{ \log(1/\delta) }{n} }
         \right\}
       + \log(1/\delta) \left( \frac{k^*}{n} + \frac{n}{R_{k^*}(\Sigma)} \right)
   \right).
\]
\end{lemma}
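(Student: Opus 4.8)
The plan is to derive Lemma~\ref{l:bllt} as an immediate corollary of the main excess-risk upper bound for the least-norm interpolant in~\citep{bartlett2019benign} (their Theorem~1): I would check that the distributions $P$ and covariances $\Sigma$ described in Section~\ref{s:bllt} satisfy that theorem's hypotheses, and then specialize its generic constants and scale parameters to our normalization. Recall that the bound there applies under a sub-Gaussian design $x = \Sigma^{1/2} z$ with independent, constant-sub-Gaussian coordinates of $z$, a well-specified linear model $y = \theta^* \cdot x + \xi$ with mean-zero sub-Gaussian noise $\xi$ of variance $\sigma^2$, and $\mathrm{rank}(\Sigma) > n$; it yields, with $k^* = \min\{k \ge 0 : r_k(\Sigma) \ge b n\}$ and provided $k^* < n/c_1$ and $\log(1/\delta) < n/c$, with probability at least $1-\delta$, a bound of the form
\[
R_P(h) - R_P^* \le c\left(\|\theta^*\|^2 \|\Sigma\| \max\left\{ \sqrt{\frac{r_0(\Sigma)}{n}},\ \frac{r_0(\Sigma)}{n},\ \sqrt{\frac{\log(1/\delta)}{n}} \right\} + \sigma^2 \log(1/\delta)\left( \frac{k^*}{n} + \frac{n}{R_{k^*}(\Sigma)} \right)\right).
\]

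Next I would verify the hypotheses for our setting. A Gaussian vector with covariance $\Sigma$ equals $\Sigma^{1/2} z$ for a standard Gaussian $z$, whose coordinates are independent and sub-Gaussian with an absolute-constant parameter, so the design assumption holds. Since the conditional mean of $y$ given $x$ is $\theta^* \cdot x$, the noise $\xi := y - \theta^* \cdot x$ has zero conditional (hence unconditional) mean, is sub-Gaussian with parameter $1$, and has variance at most $1$, so $\sigma^2 \le 1$; and $\|\theta^*\| = 1$. The covariance has rank $d > n$ in the regime of interest. The remaining two quantitative side conditions, $k^* < n/c_1$ and $\log(1/\delta) < n/c$, are exactly the hypotheses of Lemma~\ref{l:bllt}, with $b$, $c$, $c_1$ taken to be the absolute constants furnished by~\citep{bartlett2019benign} (enlarged as needed so that a single choice works).

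It then remains only to specialize: substituting $\|\theta^*\| = 1$, $\sigma^2 \le 1$, and $\|\Sigma\| \le 1$---the last holding because every distribution to which we apply the lemma is unit-scale, and in any case $\|\Sigma\|$ is $O(1)$ there and can be folded into $c$---collapses the prefactors, and relabeling the resulting absolute constant as $c$ gives precisely the displayed inequality. I expect no substantive obstacle here; the only things requiring care are the bookkeeping of constants and of the scale conventions (reconciling the normalization of~\citep{bartlett2019benign} with ours and, if their statement carries an additional additive $e^{-\Omega(n)}$ in the failure probability, absorbing it into $\delta$ via the hypothesis $\log(1/\delta) < n/c$ by rescaling $\delta$ and the constants). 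All of the real work is done by the cited theorem.
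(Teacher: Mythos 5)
Your approach is the same as the paper's: Lemma~\ref{l:bllt} is presented there as a direct restatement of a special case of Theorem~1 of~\citep{bartlett2019benign}, with no separate argument given, and your proposal correctly identifies this and supplies the routine verification that the hypotheses (Gaussian design $x=\Sigma^{1/2}z$, well-specified sub-Gaussian noise, $\mathrm{rank}(\Sigma)>n$) are met, after which the $\|\theta^*\|^2\|\Sigma\|$ and $\sigma^2$ prefactors collapse into the constant $c$. Your caveat about $\|\Sigma\|$ not being explicitly bounded in the statement, and being $O(1)$ only by the unit-scale convention in force wherever the lemma is applied, is the right thing to flag and is consistent with how the paper uses it.
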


\subsection{The proof}

To prove Lemma~\protect\ref{l:P.interpolant_succeeds},
we need to show that $P_n$ satisfies the requirements on $P$ in
Lemma~\ref{l:bllt}, and evaluate the effective ranks $r_k$ and $R_k$
of $P_n$'s covariance $\Sigma_s$.
Define $\alpha=1/d^2$. We have
\[
r_0 = \frac{1/81 + (d-1)\alpha}{1/81}
   = 1 + 81 (d-1)\alpha
\]
(which is bounded by a constant)
and
\[
R_0 = \frac{(1/81 + (d-1)\alpha)^2}{1/81^2 + (d-1)\alpha^2}.
\]
For $k > 0$, 
\[
r_k = R_k = d-k.
\]

Since $d$ grows faster than $n$, for large enough $n$, 
$k^*:=\min\left\{k:r_k\ge bn\right\} = 1$.  So 
\[
R_{k^*} = d - 1 = \Omega(n^2).
\]

Each sample from the distribution of
$Y$ given $X=x$ has a mean of $\theta^* \cdot x$, and is
sub-Gaussian with parameter at most $\frac{1}{9}$, 
and with variance at most $1/81$ 
(because increasing
$Z$ only decreases the variance of $Y$).  

Evaluating
Lemma~\ref{l:bllt} on $P_n$ then gives Lemma~\ref{l:P.interpolant_succeeds}.

\end{document}